\definecolor{darkblue}{rgb}{0, 0, 0.5}
\definecolor{UserPromptColor}{RGB}{0, 102, 204} % Darker blue
\definecolor{ModelResponseColor}{RGB}{204, 102, 0} % Darker orange
\newcommand{\bx}{\boldsymbol{x}}
\newcommand{\bz}{\boldsymbol{z}}
\newcommand{\bw}{\boldsymbol{\mathrm{w}}}
\definecolor{LightCyan}{rgb}{0.88,1,1}
\definecolor{LightYellow}{rgb}{1,1,0.88}
\definecolor{DarkYellow}{rgb}{0.8352941176470589,0.7137254901960784,0.0392156862745098}
\definecolor{Yellow}{rgb}{1,1,0.0784313725490196}
\definecolor{mydarkgreen}{RGB}{69, 153, 44}
\newcommand{\sign}{\mathrm{sign}}
\definecolor{green}{RGB}{0, 128, 0}
\definecolor{red}{RGB}{245, 60, 60}
\newcommand{\colorcellauroc}[1]{%
  \ifdim #1 pt < 0.5pt
    \cellcolor{red!100}
  \else
    \ifdim #1 pt < 0.51pt
      \cellcolor{red!100}
    \else
      \ifdim #1 pt < 0.52pt
        \cellcolor{red!20}
      \else
        \ifdim #1 pt < 0.55pt
          \cellcolor{red!10}
        \else
          \ifdim #1 pt < 0.6pt
            \cellcolor{green!10}
          \else
            \ifdim #1 pt < 0.7pt
              \cellcolor{green!20}
            \else
              \ifdim #1 pt < 0.8pt
                \cellcolor{green!40}
              \else
                \ifdim #1 pt < 0.9pt
                  \cellcolor{green!100}
                \else
                  \cellcolor{green!100}
                \fi
              \fi
            \fi
          \fi
        \fi
      \fi
    \fi
  \fi
  #1
}
\newcommand{\colorcellece}[1]{%
  \ifdim #1 pt < 0.02pt
    \cellcolor{green!45}
  \else
    \ifdim #1 pt < 0.05pt
      \cellcolor{green!40}
    \else
      \ifdim #1 pt < 0.07pt
        \cellcolor{green!35}
      \else
        \ifdim #1 pt < 0.10pt
          \cellcolor{green!30}
        \else
          \ifdim #1 pt < 0.15pt
            \cellcolor{green!25}
          \else
            \ifdim #1 pt < 0.20pt
              \cellcolor{green!20}
            \else
              \ifdim #1 pt < 0.25pt
                \cellcolor{red!50}
              \else
                \ifdim #1 pt < 0.30pt
                  \cellcolor{red!100}
                \else
                  \cellcolor{red!100}
                \fi
              \fi
            \fi
          \fi
        \fi
      \fi
    \fi
  \fi
  #1
}
\newcommand{\colorcellecetwo}[1]{%
  \ifdim #1 pt < 0.035pt
    \cellcolor{green!45}
  \else
    \ifdim #1 pt < 0.04pt
      \cellcolor{green!40}
    \else
      \ifdim #1 pt < 0.045pt
        \cellcolor{green!35}
      \else
        \ifdim #1 pt < 0.05pt
          \cellcolor{green!30}
        \else
          \ifdim #1 pt < 0.055pt
            \cellcolor{green!25}
          \else
            \ifdim #1 pt < 0.06pt
              \cellcolor{green!20}
            \else
              \ifdim #1 pt < 0.065pt
                \cellcolor{red!50}
              \else
                \ifdim #1 pt < 0.07pt
                  \cellcolor{red!100}
                \else
                  \cellcolor{red!100}
                \fi
              \fi
            \fi
          \fi
        \fi
      \fi
    \fi
  \fi
  #1
}
\newcommand{\colorcellecethree}[1]{%
  \ifdim #1 pt < 0.110pt
    \cellcolor{green!45}
  \else
    \ifdim #1 pt < 0.120pt
      \cellcolor{green!40}
    \else
      \ifdim #1 pt < 0.125pt
        \cellcolor{green!35}
      \else
        \ifdim #1 pt < 0.130pt
          \cellcolor{green!30}
        \else
          \ifdim #1 pt < 0.135pt
            \cellcolor{green!25}
          \else
            \ifdim #1 pt < 0.140pt
              \cellcolor{green!20}
            \else
              \ifdim #1 pt < 0.145pt
                \cellcolor{red!50}
              \else
                \ifdim #1 pt < 0.150pt
                  \cellcolor{red!100}
                \else
                  \cellcolor{red!100}
                \fi
              \fi
            \fi
          \fi
        \fi
      \fi
    \fi
  \fi
  #1
}
\newcommand{\colorcellecefour}[1]{%
  \ifdim #1 pt < 0.142pt
    \cellcolor{green!45}
  \else
    \ifdim #1 pt < 0.146pt
      \cellcolor{green!40}
    \else
      \ifdim #1 pt < 0.148pt
        \cellcolor{green!35}
      \else
        \ifdim #1 pt < 0.150pt
          \cellcolor{green!30}
        \else
          \ifdim #1 pt < 0.152pt
            \cellcolor{green!25}
          \else
            \ifdim #1 pt < 0.154pt
              \cellcolor{green!20}
            \else
              \ifdim #1 pt < 0.156pt
                \cellcolor{red!50}
              \else
                \ifdim #1 pt < 0.158pt
                  \cellcolor{red!100}
                \else
                  \cellcolor{red!100}
                \fi
              \fi
            \fi
          \fi
        \fi
      \fi
    \fi
  \fi
  #1
}
\newcommand{\colorcellecefive}[1]{%
  \ifdim #1 pt < 0.217pt
    \cellcolor{green!45}
  \else
    \ifdim #1 pt < 0.228pt
      \cellcolor{green!40}
    \else
      \ifdim #1 pt < 0.233pt
        \cellcolor{green!35}
      \else
        \ifdim #1 pt < 0.238pt
          \cellcolor{green!30}
        \else
          \ifdim #1 pt < 0.243pt
            \cellcolor{green!25}
          \else
            \ifdim #1 pt < 0.248pt
              \cellcolor{green!20}
            \else
              \ifdim #1 pt < 0.253pt
                \cellcolor{red!50}
              \else
                \ifdim #1 pt < 0.258pt
                  \cellcolor{red!100}
                \else
                  \cellcolor{red!100}
                \fi
              \fi
            \fi
          \fi
        \fi
      \fi
    \fi
  \fi
  #1
}
\newcommand{\colorcellecesix}[1]{%
  \ifdim #1 pt < 0.106pt
    \cellcolor{green!45}
  \else
    \ifdim #1 pt < 0.107pt
      \cellcolor{green!40}
    \else
      \ifdim #1 pt < 0.109pt
        \cellcolor{green!35}
      \else
        \ifdim #1 pt < 0.111pt
          \cellcolor{green!30}
        \else
          \ifdim #1 pt < 0.113pt
            \cellcolor{green!25}
          \else
            \ifdim #1 pt < 0.115pt
              \cellcolor{green!20}
            \else
              \ifdim #1 pt < 0.117pt
                \cellcolor{red!50}
              \else
                \ifdim #1 pt < 0.119pt
                  \cellcolor{red!100}
                \else
                  \cellcolor{red!100}
                \fi
              \fi
            \fi
          \fi
        \fi
      \fi
    \fi
  \fi
  #1
}
\newcommand{\colorcellbrierone}[1]{%
  \ifdim #1 pt < 0.4206pt
    \cellcolor{green!45}
  \else
    \ifdim #1 pt < 0.430pt
      \cellcolor{green!40}
    \else
      \ifdim #1 pt < 0.450pt
        \cellcolor{green!35}
      \else
        \ifdim #1 pt < 0.470pt
          \cellcolor{green!30}
        \else
          \ifdim #1 pt < 0.490pt
            \cellcolor{green!25}
          \else
            \ifdim #1 pt < 0.540pt
              \cellcolor{green!20}
            \else
              \ifdim #1 pt < 0.545pt
                \cellcolor{red!50}
              \else
                \ifdim #1 pt < 0.550pt
                  \cellcolor{red!100}
                \else
                  \cellcolor{red!100}
                \fi
              \fi
            \fi
          \fi
        \fi
      \fi
    \fi
  \fi
  #1
}
\newcommand{\colorcellbriertwo}[1]{%
  \ifdim #1 pt < 0.115pt
    \cellcolor{green!45}
  \else
    \ifdim #1 pt < 0.118pt
      \cellcolor{green!40}
    \else
      \ifdim #1 pt < 0.120pt
        \cellcolor{green!35}
      \else
        \ifdim #1 pt < 0.122pt
          \cellcolor{green!30}
        \else
          \ifdim #1 pt < 0.124pt
            \cellcolor{green!25}
          \else
            \ifdim #1 pt < 0.126pt
              \cellcolor{green!20}
            \else
              \ifdim #1 pt < 0.129pt
                \cellcolor{green!10}
              \else
                \ifdim #1 pt < 0.130pt
                  \cellcolor{red!100}
                \else
                  \cellcolor{red!100}
                \fi
              \fi
            \fi
          \fi
        \fi
      \fi
    \fi
  \fi
  #1
}
\newcommand{\colorcellbrierthree}[1]{%
  \ifdim #1 pt < 0.316pt
    \cellcolor{green!45}
  \else
    \ifdim #1 pt < 0.3195pt
      \cellcolor{green!40}
    \else
      \ifdim #1 pt < 0.3200pt
        \cellcolor{green!35}
      \else
        \ifdim #1 pt < 0.321pt
          \cellcolor{green!30}
        \else
          \ifdim #1 pt < 0.322pt
            \cellcolor{green!25}
          \else
            \ifdim #1 pt < 0.3230pt
              \cellcolor{green!20}
            \else
              \ifdim #1 pt < 0.324pt
                \cellcolor{red!50}
              \else
                \ifdim #1 pt < 0.325pt
                  \cellcolor{red!100}
                \else
                  \cellcolor{red!100}
                \fi
              \fi
            \fi
          \fi
        \fi
      \fi
    \fi
  \fi
  #1
}
\newcommand{\colorcellbrierfour}[1]{%
  \ifdim #1 pt < 0.477pt
    \cellcolor{green!45}
  \else
    \ifdim #1 pt < 0.485pt
      \cellcolor{green!40}
    \else
      \ifdim #1 pt < 0.490pt
        \cellcolor{green!35}
      \else
        \ifdim #1 pt < 0.495pt
          \cellcolor{green!30}
        \else
          \ifdim #1 pt < 0.500pt
            \cellcolor{green!25}
          \else
            \ifdim #1 pt < 0.505pt
              \cellcolor{green!20}
            \else
              \ifdim #1 pt < 0.510pt
                \cellcolor{red!50}
              \else
                \ifdim #1 pt < 0.514pt
                  \cellcolor{red!100}
                \else
                  \cellcolor{red!100}
                \fi
              \fi
            \fi
          \fi
        \fi
      \fi
    \fi
  \fi
  #1
}
\theoremstyle{plain}
\newtheorem{theorem}{Theorem}[section]
\newtheorem{proposition}[theorem]{Proposition}
\theoremstyle{definition}
\theoremstyle{remark}
\title{Just rephrase it! Uncertainty estimation in closed-source language models via multiple rephrased queries}
\author{Adam Yang \\
University of Bristol\\
Bristol, United Kingdom\\
\texttt{adam.yang@bristol.ac.uk} \\
\And
Chen Chen \\
Nanyang University \\
Singapore\\
\texttt{CHEN1436@e.ntu.edu.sg} \\
\AND
Konstantinos Pitas \\
INRIA Grenoble Rhône-Alpes \\
Grenoble, France \\
\texttt{kostasp210@gmail.com}
}
\begin{document}

\maketitle

\begin{abstract}
State-of-the-art large language models are sometimes distributed as open-source software but are also increasingly provided as a closed-source service. These closed-source large-language models typically see the widest usage by the public, however, they often do not provide an estimate of their uncertainty when responding to queries. As even the best models are prone to ``hallucinating" false information with high confidence, a lack of a reliable estimate of uncertainty limits the applicability of these models in critical settings. We explore estimating the uncertainty of closed-source LLMs via multiple rephrasings of an original base query. Specifically, we ask the model, multiple rephrased questions, and use the similarity of the answers as an estimate of uncertainty. We diverge from previous work in i) providing rules for rephrasing that are simple to memorize and use in practice ii) proposing a theoretical framework for why multiple rephrased queries obtain calibrated uncertainty estimates. Our method demonstrates significant improvements in the calibration of uncertainty estimates compared to the baseline and provides intuition as to how query strategies should be designed for optimal test calibration.
\end{abstract}

\section{Introduction}

Since the introduction of ChatGPT \citep{brown2020language}, closed-source Large Language Models have seen incredibly rapid adoption by the general public, resulting in great productivity gains \citep{eloundou2023gpts}. At the same time, closed-source LLMs are prone to generating highly convincing but false information, a problem known as "hallucinating" \citep{huang2023survey,ji2023survey}. They are furthermore known to state this false information with high confidence \citep{kadavath2022language}. This combination presents a conundrum to users. Specifically, while on average LLM-generated text is useful, the unreliability of any individual LLM-generated text and the lack of an effective mechanism to separate reliable and unreliable generations necessitates that LLM answers be inspected and vetted by the user, especially for critical applications. This significantly slows the LLM usage pipeline. Furthermore, the typical user has limited access to the model (specifically he can only query the LLM with textual prompts), and thus standard approaches for uncertainty estimation \citep{guo2017calibration,arbel2023primer} in deep neural networks cannot be applied, as they typically require access to the deep neural networks logits. 

It is folk wisdom that one approach for estimating LLM uncertainty, even with such limited access to the model, is to query it multiple times \citep{wang2022self,xiong2023can}. This approach is based on the premise that LLM-generated text is frequently stochastic by design, as the next generated token is chosen through nucleus sampling \citep{holtzman2019curious} or top-k decoding \citep{fan2018hierarchical,radford2019language}. \cite{wang2022self} and \cite{xiong2023can} proposed to use the consistency of multiple answers as an estimate of uncertainty. \cite{xiong2023can} furthermore proposed to add "noise" to the base query at each repetition, through misleading hints. While adding noise at each query repetition has been shown to improve over using the internal stochasticity of the LLM, we believe that there is considerable room for improvement. Specifically:

\begin{itemize}
\item The current SOTA hint-based approach to submitting multiple noisy queries \citep{xiong2023can} is cumbersome for end users, as it requires memorization of the hint patterns. This in turn might significantly limit adoption.
\item A theoretical understanding of why multiple queries work in the top-1 decoding settings is currently lacking. Specifically, a clear understanding of which "noising" methods work and why would help the community design better noising rules. 
\item Furthermore, a more detailed understanding of when and why adding noise to queries helps in the top-k decoding setting, (which by itself results in multiple answers) would help avoid "noising" queries when this is unnecessary.
\end{itemize}

\begin{figure*}[t!]
    \centering

    \begin{subfigure}{\textwidth}
        \centering
        \includegraphics[width=0.9\linewidth]{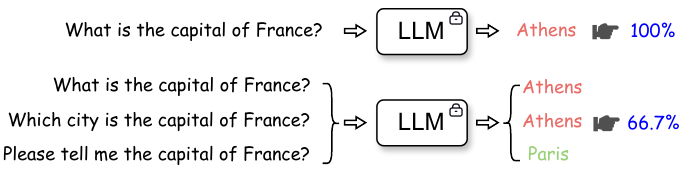}
        %\caption{CIFAR-100, ResNet20}
        \label{fig:subfig1}
    \end{subfigure}

    \caption{\textbf{Multiple rephrased queries for uncertainty estimation.} Top row: Querying a closed-source LLM only once with a base query may yield an incorrect top-1 prediction. In the absence of additional information, the naive baseline is to assign $100\%$ confidence to this singular prediction. Bottom row: Querying the model multiple times with rephrased versions of the base query produces the  $\{\mathrm{Athens}\}$ class twice and the $\{\mathrm{Paris}\}$ class once. This is roughly equivalent to $66.6\%$ confidence. This observation should serve as an alert to a potential error, even when the true label is unknown.}
    \label{fig:main}
\end{figure*}

In this work, we delve deeply in, refine, and theoretically analyze multiple queries for uncertainty estimation. Given a base query, we restrict ourselves to submitting rephrased versions of the base query to an LLM, checking the consistency of the answers, and using the result as an estimate of uncertainty. Concretely our contributions are the following:
\begin{itemize}
    \item  We test four simple strategies for creating multiple rephrased queries, and find that in the top-1 decoding setting, two of them, substituting words with their synonyms and making the base query more verbose, result in significant calibration gains over the naive baseline of trusting every LLM answer. These two strategies have the advantage of requiring only basic language and arithmetic skills by the end user, and practically no memorization apart from the rephrasing rule. 
    \item  We propose a theoretical model for multiple rephrased queries on a simplified top-1 decoding setting. Given multiple rephrased queries, our analysis shows that it is possible to recover the probability of the answer under the inaccessible categorical distribution of the LLM. 
    \item  We propose a theoretical model for multiple rephrased queries on a simplified top-k decoding setting \citep{holtzman2019curious}. Our analysis implies that generating multiple answers using the same base query and top-k decoding can also recover a tempered version of the probability of the answer under the inaccessible categorical distribution of the LLM. While generating multiple answers in this way (without rephrasing) might be sufficient for good calibration, we find that rephrasing results in additional tempering of the resulting uncertainty estimate, which is known to improve calibration. 
    \item In practice however, when comparing top-k and top-1 decoding with and without rephasing, in terms of Brier score, we find that top-1 decoding with rephrasing results in the best trade-off between accuracy and calibration.
    %\item  Our analysis also uncovers limitations in the usefullness of rephrasings in the top-k decoding setting. In particular, we expect that, if post-hoc tempering or access to the internal LLM temperature is available, supplementing top-k decoding with rephasings will result in limited to no calibration gains.

\end{itemize}

\section{Rephrasing drastically improves calibration for top-1 decoding}

\begin{table}[t]
\centering
\captionsetup{skip=8pt}
\begin{tabularx}{\textwidth}{c|X}
\hline
\textbf{Method} & \textbf{Question} \\
\hline\hline
original & What part of the digestive system first causes chemical changes to food?
A. Teeth in the mouth. B. Saliva in the mouth. C. Enzymes in the stomach.
D. Enzymes in the small intestine. \\
\hline\hline
reword & Which region of the gastrointestinal tract initiates the initial chemical modifications
to food intake? A. Teeth in the mouth. B. Saliva in the mouth. C. Enzymes in the stomach.
D. Enzymes in the small intestine.\\
\hline
rephrase & In what region of the digestive system does the food undergo its initial chemical
transformations? A. Teeth in the mouth. B. Saliva in the mouth. C. Enzymes in the stomach.
D. Enzymes in the small intestine.\\
\hline
paraphrase &  At what point in the digestive process do initial chemical transformations of food
occur and which section of the system carries out this function? A. Teeth in the mouth. B. Saliva in the mouth. C. Enzymes in the stomach.
D. Enzymes in the small intestine.\\
\hline
expansion & Considering the intricate process by which our bodies break down and absorb nutrients from food, which specific organ or region within the digestive system initiates the essential biochemical transformations through enzyme secretion and the beginning of the digestion process? A. Teeth in the mouth. B. Saliva in the mouth. C. Enzymes in the stomach.
D. Enzymes in the small intestine. \\
\hline
\end{tabularx}
\caption{Rephrasing examples generated by Mistral-7B, with rephrasing methods listed on the left and corresponding rephrases on the right.}
\label{table:generation_mistral}
\end{table}

Let $f: \mathcal{X} \rightarrow \mathcal{Y}$ be an LLM which takes $\bx$ an input query in the form of a multiple choice question, and outputs $y$, an answer. \emph{We first consider top-1 decoding such that the answers of the LLM are deterministic.} We consider randomized transformations of the base query $\mathcal{T}(\bx)\sim \tau$ in the form of rephrasings of the query, and the most probable answer under the transformations $A = \mathrm{argmax}_i \mathbb{P}\left( f(\mathcal{T}(\bx)) = i\right)$. In a multiple choice question setting (which can be seen as a multi-class classification problem), we will use $A$ as the predicted class and 
\begin{equation*}
p_A(\bx) = \mathbb{P}\left( f(\mathcal{T}(\bx)) = A\right),
\end{equation*}
as our confidence about this prediction (here the predicted class coincides with a predicted token denoting this class). We consider four types of rephrasings, with an increasing level of modification to the original query:
\begin{itemize}
\item Reword: Focuses on replacing words with their synonyms without significantly altering the sentence structure or adding new content.
\item Rephrase: Modifies the original question with changes in structure and possibly synonyms to achieve a similar but distinct question.
\item Paraphrase: Reconstructs the original query, often significantly, to retain its meaning while altering its presentation. 
% This can involve changing the sentence structure, using synonyms, and sometimes modifying the level of detail.
\item Expansion: Elaborates on the original query, making it more detailed or specific, often by adding context or additional considerations.
\end{itemize}
We provide our one-shot prompt template for each rephrasing method in Table~\ref{table:prompt} in Appendix~\ref{app:prompt}, and example generations from Mistral-7B in Table~\ref{table:generation_mistral} and generations from Llama-7B/13B in Appendix~\ref{app:generations}. In general, we perform the rephrasings with a separate instance of the same model that responds to the queries. We estimate $p_A(\bx)$ using Monte Carlo sampling with 10 draws from $\mathcal{T}(\bx)\sim \tau$ to estimate uncertainty with our method unless stated otherwise.

\begin{table*}[t]
    \caption{Evaluation results on ARC-Challenge with various rephrasing methods applied to three LLMs. In the majority of cases, the rephrasing approach outperforms the naive baseline by $10-40\%$ in AUROC, $10-30\%$ in ECE and $0-0.4$ in Brier.
    % $10-40\%$ in AUROC, $40-80\%$ in ECE and $0-0.3$ in Brier.
    }
    \label{arc-challenge-main}
\centering
\small
\begin{tabular}{cccccccc}
\toprule
Model & Rephrasing & Acc $\uparrow$ & ECE $\downarrow$ & TACE $\downarrow$ & Brier $\downarrow$  & AUROC $\uparrow$ & temp  \\
\toprule
\multirow {6}{*}{Mistral-7B} 
& {top-1}        & 0.742 & 0.258 & \textbf{0.065} & 0.517 & 0.5   & -    \\
& {hint} &  0.593, & 0.201, & 0.108, & 0.614, & 0.695, & - \\ 
\cdashline{2-8} \rule{0pt}{2.25ex}
& {reword}       & 0.619 & 0.12  & 0.103 & 0.512 & \textbf{0.846} & 1.0  \\
& {rephrase}      & 0.555 & 0.125 & 0.103 & 0.571 & 0.817 & 1.5  \\
& {paraphrase}   & 0.525 & \textbf{0.102} & 0.115 & 0.592 & 0.827 & 1.5  \\
& {expansion}    & 0.602 & 0.133 & {0.099} & \textbf{0.509} & 0.847 & 1.0  \\
\midrule
\multirow {6}{*}{Llama-2-7B} & {top-1} &   0.483 & 0.517 & - & 1.034 & 0.5 & - \\
& {hint} &  0.258, & \textbf{0.071}, & \textbf{0.144}, & 0.839, & 0.562, & - \\ 
\cdashline{2-8} \rule{0pt}{2.25ex}
& {reword}       & 0.352 & 0.193 & 0.176 & 0.853 & 0.626 & 1.5  \\
& {rephrase}      & 0.381 & 0.263 & 0.173 & 0.871 & 0.656 & 1.5  \\
& {paraphrase}   & 0.39  & 0.287 & 0.162 & 0.883 & 0.67  & 1.0  \\
& {expansion}    & 0.373 & 0.112 & 0.153 & \textbf{0.778} & \textbf{0.687} & 1.5  \\
\midrule
\multirow {6}{*}{Llama-2-13B} & {top-1} &  0.508 & 0.492 & - & 0.983 & 0.5 & - \\
& {hint} &  0.331, & 0.147, & 0.134, & 0.813, & 0.57, & - \\ 
\cdashline{2-8} \rule{0pt}{2.25ex}
& {reword}      & 0.445 & \textbf{0.084} & \textbf{0.119} & \textbf{0.714} & 0.721 & 1.5  \\
& {rephrase}     & 0.441 & 0.128 & 0.134 & 0.727 & 0.713 & 1.5  \\
& {paraphrase}  & 0.453 & 0.092 & 0.129 & 0.717 & 0.697 & 1.5  \\
& {expansion}   & 0.441 & 0.154 & 0.142 & \textbf{0.715} & \textbf{0.784} & 1.2  \\
\bottomrule
\end{tabular}
\end{table*}

\begin{table*}[t]
    \caption{Evaluation results on ARC-Easy with various rephrasing methods applied to three LLMs. In the majority of cases, the rephrasing approach outperforms the naive baseline by $10-40\%$ in AUROC, $10-30\%$ in ECE, and $0-0.4$ in Brier.
    % $10-40\%$ in AUROC, $40-80\%$ in ECE and $0-0.2$ in Brier
    }
    \label{arc-easy-main}
\centering
\small
\begin{tabular}{cccccccc}
\toprule
Model & Rephrasing & Acc $\uparrow$ & ECE $\downarrow$ & TACE $\downarrow$ & Brier $\downarrow$  & AUROC $\uparrow$ & temp  \\
\toprule
\multirow {6}{*}{Mistral-7B} & {top-1}             & 0.866 & 0.134 & \textbf{0.034} & \textbf{0.269} & 0.5   & -   \\
& {hint} &  0.773, & 0.17, & 0.076, & 0.386, & 0.795, & - \\ 
\cdashline{2-8} \rule{0pt}{2.25ex}
& {reword}            & 0.753 & 0.045 & 0.062 & 0.297 & 0.931 & 1.0 \\
& {rephrase}           & 0.678 & 0.035 & 0.076 & 0.357 & \textbf{0.953} & 1.5 \\
& {paraphrase}        & 0.663 & 0.036 & 0.08  & 0.381 & 0.943 & 1.5 \\
& {expansion}         & 0.742 & \textbf{0.034} & 0.067 & 0.31  & 0.936 & 1.0 \\
\midrule
\multirow {6}{*}{Llama-2-7B} & {top-1}             & 0.672 & 0.328 & 0.082 & 0.656 & 0.5   & -   \\
& {hint} &  0.231, & \textbf{0.041}, & 0.149, & 0.827, & 0.663, & - \\  
\cdashline{2-8} \rule{0pt}{2.25ex}
& {reword}            & 0.43  & 0.084 & \textbf{0.119} & 0.672 & 0.818 & 1.5 \\
& {rephrase}           & 0.535 & 0.131 & \textbf{0.117} & \textbf{0.603} & 0.830  & 1.5 \\
& {paraphrase}        & 0.526 & 0.184 & 0.125 & 0.626 & \textbf{0.831} & 1.0 \\
& {expansion}         & 0.405 & 0.045 & \textbf{0.119} & 0.692 & 0.818 & 1.5 \\
\midrule
\multirow {6}{*}{Llama-2-13B} & {top-1}            & 0.617 & 0.383 & \textbf{0.096} & 0.767 & 0.5   & -   \\
& {hint} &  0.346, & \textbf{0.089}, & 0.128, & 0.77, & 0.673, & - \\  
\cdashline{2-8} \rule{0pt}{2.25ex}
& {reword}           & 0.546 & 0.07  & 0.11  & 0.58  & 0.814 & 1.5 \\
& {rephrase}          & 0.526 & 0.07  & 0.112 & 0.579 & 0.842 & 1.5 \\
& {paraphrase}       & 0.518 & 0.104 & 0.119 & 0.604 & 0.815 & 1.5 \\
& {expansion}        & 0.524 & 0.078 & 0.12  & \textbf{0.552} & \textbf{0.893} & 1.2 \\
\bottomrule
\end{tabular}
\end{table*}

\begin{table*}[t]
    \caption{Evaluation results on OpenBookQA with various rephrasing methods applied to three LLMs. In the majority of cases, the rephrasing approach outperforms the naive baseline by $10-40\%$ in AUROC, $10-30\%$ in ECE, and $0-0.4$ in Brier.
    % $10-40\%$ in AUROC, $40-80\%$ in ECE and $0-0.3$ in Brier.
    }
    \label{openbookqa-main}
\centering
\small
\begin{tabular}{cccccccc}
\toprule
Model & Rephrasing & Acc $\uparrow$ & ECE $\downarrow$ & TACE $\downarrow$ & Brier $\downarrow$  & AUROC $\uparrow$ & temp  \\
\toprule
\multirow {6}{*}{Mistral-7B} & {top-1}           & 0.655 & 0.345 & \textbf{0.086} & 0.69 & 0.5    & -   \\ 
& {hint} &  0.56, & 0.265, & 0.119, & 0.71, & 0.606, & - \\ 
\cdashline{2-8} \rule{0pt}{2.25ex}
& {reword}          & 0.552 & 0.105 & 0.102 & \textbf{0.592} & 0.796 & 1.0 \\ 
& {rephrase}         & 0.482 & 0.107 & 0.122 & 0.641 & 0.809 & 1.5 \\ 
& {paraphrase}      & 0.49  & \textbf{0.076} & 0.116 & 0.622 & 0.826 & 1.5 \\ 
& {expansion}       & 0.518 & 0.087 & 0.117 & 0.596 & \textbf{0.837} & 1.0 \\ 
\midrule
\multirow {6}{*}{Llama-2-7B} & {top-1}           & 0.478 & 0.522 & \textbf{0.131} & 1.045 & 0.5   & -   \\ 
& {hint} &  0.275, & \textbf{0.08}, & 0.142, & 0.832, & 0.556, & - \\ 
\cdashline{2-8} \rule{0pt}{2.25ex}
& {reword}          & 0.388 & 0.137 & 0.143 & 0.786 & 0.689 & 1.5 \\ 
& {rephrase}         & 0.39  & 0.196 & 0.156 & 0.806 & \textbf{0.721} & 1.5 \\ 
& {paraphrase}      & 0.398 & 0.227 & 0.159 & 0.834 & 0.712 & 1.0 \\ 
& {expansion}       & 0.362 & 0.083 & 0.138 & \textbf{0.775} & 0.678 & 1.5 \\ 
\midrule
\multirow {6}{*}{Llama-2-13B} & {top-1}          & 0.418 & 0.582 & -     & 1.165 & 0.5   & -   \\ 
& {hint} &  0.295, & \textbf{0.069}, & \textbf{0.138}, & 0.809, & 0.613, & - \\
\cdashline{2-8} \rule{0pt}{2.25ex}
& {reword}         & 0.428 & 0.117 & 0.142 & 0.75  & 0.676 & 1.5 \\ 
& {rephrase}        & 0.428 & 0.095 & 0.14  & \textbf{0.729} & 0.73  & 1.5 \\ 
& {paraphrase}     & 0.41  & 0.116 & 0.141 & 0.759 & 0.682 & 1.5 \\
& {expansion}      & 0.41  & 0.143 & 0.147 & 0.772 & \textbf{0.702} & 1.2 \\ 
\bottomrule
\end{tabular}
\end{table*}

We used three different models, the Llama-2 7B model, the Llama-2 13B model \citep{touvron2023llama} and the Mistral 7B model \citep{jiang2023mistral}. We tested our framework on three multiple choice tasks of different difficulty namely ARC-Challenge, ARC-Easy \citep{allenai:arc}, and Openbookqa \citep{mihaylov2018can}. Following \citet{kojima2022large}, we extract the answer from LLM-generated texts by looking at the first appearance of A/B/C/D. To test for calibration we used standard calibration metrics, including the ECE and TACE \citep{naeini2015obtaining},  Brier score \citep{murphy1973new} and AUROC \citep{murphy2012machine}. We note that for a fair comparison when the accuracy drops significantly, we must consult the Brier score which is a proper scoring rule. This is because, the ECE, TACE and AUROC are not proper scoring rules and can in general trade-off accuracy for calibration. For a baseline, we assumed $100\%$ confidence for each deterministic prediction. We also tested the "hint" based approach of \cite{xiong2023can}, which we describe in detail in Appendix \ref{app:prompt}. 

We present the results in Tables \ref{arc-challenge-main}, \ref{arc-easy-main} and \ref{openbookqa-main}. In the majority of cases rephrasing outperforms the naive baseline by $10-40\%$ in AUROC, $10-30\%$ in ECE, and $0-0.4$ in Brier. Our approach also typically outperforms the ``hint" base approach of \cite{xiong2023can} by $10-20\%$ in AUROC, $5-10\%$ in ECE, and $0.1$ in Brier. In particular, the "hint" based approach which more inflexible than our approach and typically hurts accuracy significantly $10-20\%$ compared to $5-10\%$ for our approach. For our method, these accuracy drops are more prevalent in the smaller 7B models, while the larger 13B model often shows a much smaller drop. 

 Crucially, the different rephrasing methods exhibit different calibration gains. On average, in terms of all calibration metrics the best methods are the "expansion" and "reword" methods, which make the queries more verbose, and substitute words with synonyms respectively. In terms of AUROC "expansion" outperforms the alternatives by $1-5\%$. In terms of the Brier score it outperforms by $\approx 0.05$. To instantiate our rephrasings we used a prompt with a one-shot example and a temperature parameter resulting in greater or smaller varieties of rephrasings. We include this temperature parameter in the Tables. Generally, we choose this temperature that balances accuracy and calibration. In Figure \ref{fig:error_sources_complete} we plot the behaviour as the number of MC draws increases.

 In Appendix \ref{cot_experiments}, we also compare with Chain-of-Thought \cite{wei2022chain} for uncertainty estimation. We find that we get competitive results with CoT. At the same time our method is significantly easier and more natural to implement for humans interacting via text with an LLM.

\begin{figure*}[t!]
    \centering
    
    \begin{subfigure}{.2\textwidth}
    \includegraphics[width=\textwidth]{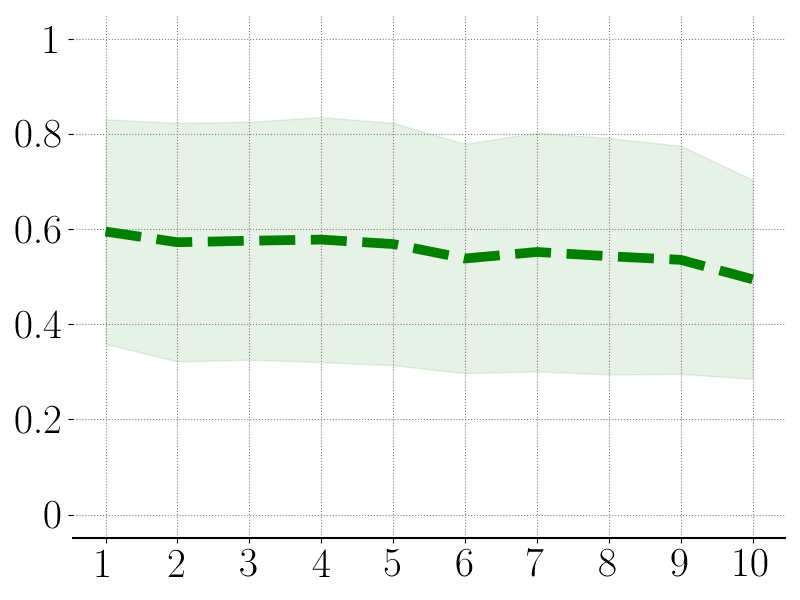}\\
    \caption{Accuracy}
    \label{fig:subfig1redun}
    \end{subfigure}%
    \begin{subfigure}{.2\textwidth}
    \includegraphics[width=\textwidth]{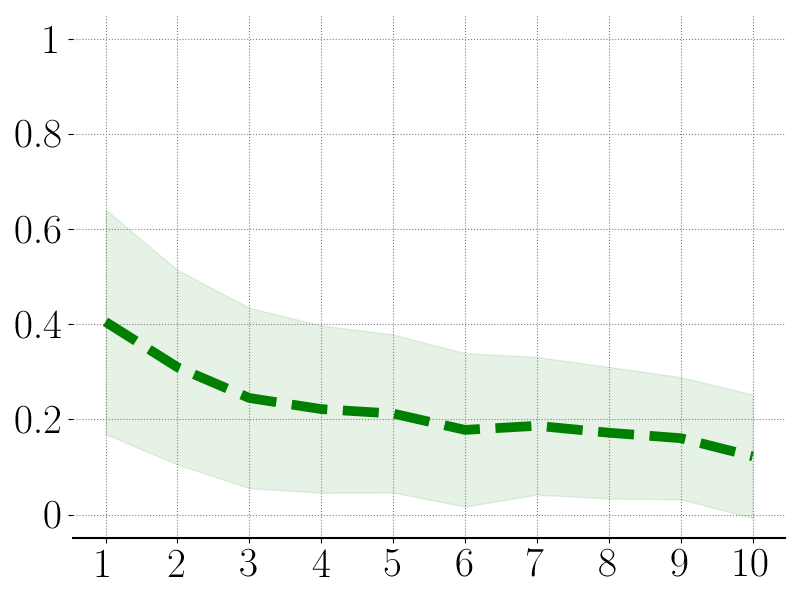}\\
    \caption{ECE}
    \label{fig:subfig2redun}
    \end{subfigure}%
    \begin{subfigure}{.2\textwidth}
    \includegraphics[width=\textwidth]{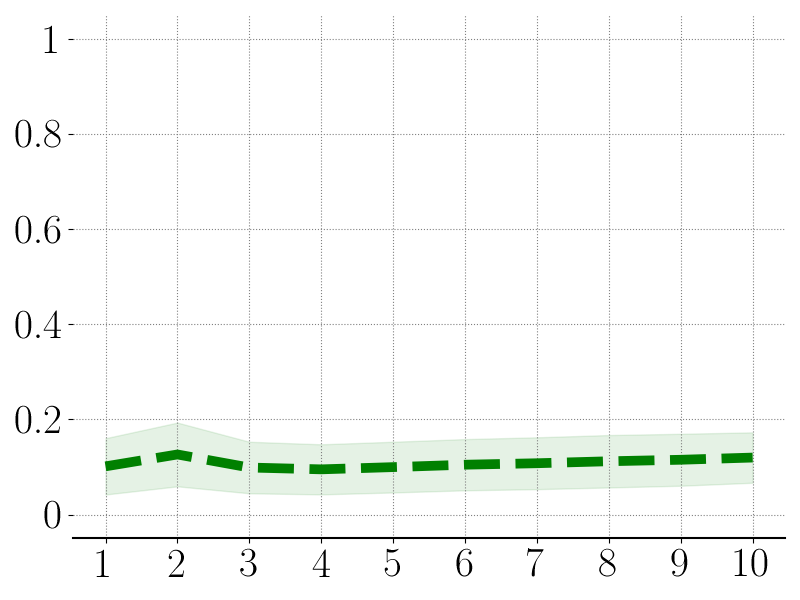}\\
    \caption{TACE}
    \label{fig:subfig3redun}
    \end{subfigure}%
    \begin{subfigure}{.2\textwidth}
    \includegraphics[width=\textwidth]{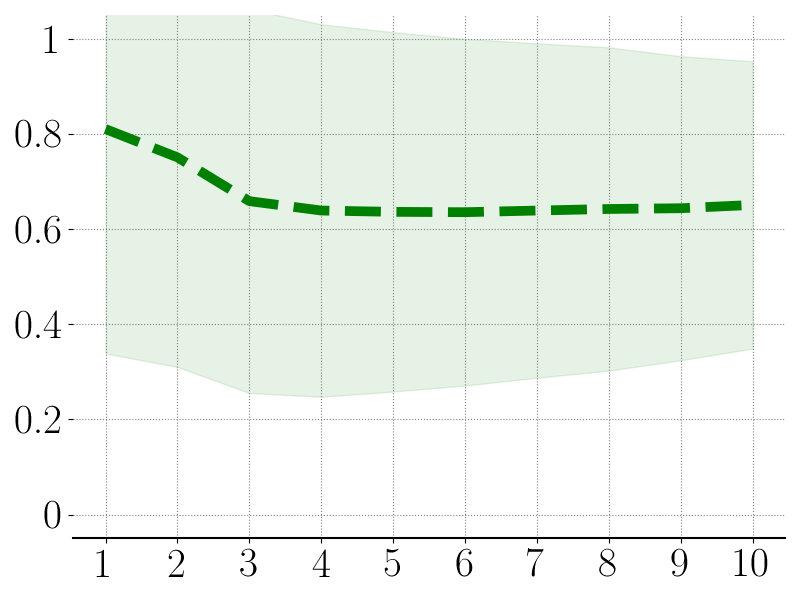}\\
    \caption{Brier}
    \label{fig:subfig4redun}
    \end{subfigure}%
    \begin{subfigure}{.2\textwidth}
    \includegraphics[width=\textwidth]{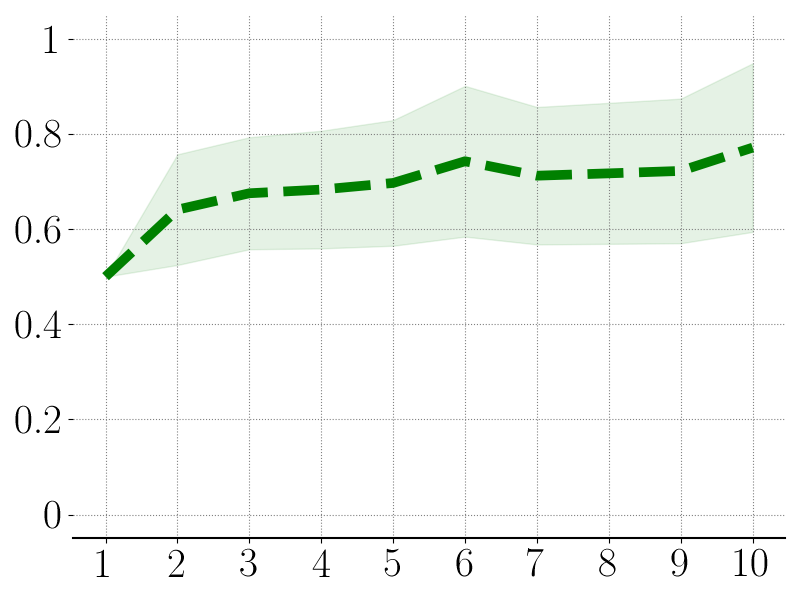}\\
    \caption{AUROC}
    \label{fig:subfig5redun}
    \end{subfigure}%
    \caption{The behavior of the Accuracy, ECE, TACE, Brier, and AUROC for all datasets, architectures, and expansion methods, as we increase the number of samples. We plot the average value as well as confidence intervals $\pm2\sigma$. We see that the ECE and the AUROC improve with more samples while the accuracy drops slightly. This might be because the meaning of some queries is completely destroyed by our rephrasings. The Brier score captures this tradeoff by having a minimum at approximately 5 samples. The TACE remains relatively stable with respect to the number of samples.}
    \label{fig:error_sources_complete}
\end{figure*}
\begin{table*}[t]
    \caption{Comparisons between our rephrasing methods and white-box logit uncertainty estimation. We see that our rephrasing methods achieve similar calibration to what would be achieved if we had access to last layer logits. This is evident both in the AUROC and TACE as well as the Brier score, which also accounts for accuracy.
    }
    \label{logits_vs_rephrasing-main}
\centering
\small
\begin{tabular}{cccccccc}
\toprule
Dataset & Model & Method & Acc $\uparrow$ & ECE $\downarrow$ & TACE $\downarrow$ & Brier $\downarrow$  & AUROC $\uparrow$\\
\toprule
\multirow {6}{*}{ARC-C} & \multirow {2}{*}{Mistral-7B} & {logits}      & 0.742 & 0.252 & 0.075 & 0.503 & 0.741 \\ 
& & {expansion}                                                        & 0.602 & 0.133 & 0.099 & 0.509 & 0.847 \\ 
\cdashline{2-8} \rule{0pt}{2.25ex}
& \multirow {2}{*}{Llama-2-7B} & {logits}                              & 0.483 & 0.362 & 0.168 & 0.853 & 0.621 \\ 
& & {expansion}                                                        & 0.373 & 0.112 & 0.153 & 0.778 & 0.687 \\ 
\cdashline{2-8} \rule{0pt}{2.25ex}
& \multirow {2}{*}{Llama-2-13B} & {logits}                             & 0.508 & 0.132 & 0.141 & 0.704 & 0.669 \\ 
& & {reword}                                                           & 0.445 & 0.084 & 0.119 & 0.714 & 0.721 \\ 
\midrule
\multirow {6}{*}{ARC-E} & \multirow {2}{*}{Mistral-7B} & {logits}      & 0.866 & 0.128 & 0.037 & 0.264 & 0.818 \\ 
& & {reword}                                                           & 0.753 & 0.045 & 0.062 & 0.297 & 0.931 \\ 
\cdashline{2-8} \rule{0pt}{2.25ex}
& \multirow {2}{*}{Llama-2-7B} & {logits}                              & 0.672 & 0.190 & 0.098 & 0.493 & 0.779 \\ 
& & {rephrase}                                                         & 0.535 & 0.131 & 0.117 & 0.603 & 0.830 \\ 
\cdashline{2-8} \rule{0pt}{2.25ex}
& \multirow {2}{*}{Llama-2-13B} & {logits}                             & 0.617 & 0.060 & 0.094 & 0.498 & 0.763 \\ 
& & {expansion}                                                        & 0.524 & 0.078 & 0.12  & 0.552 & 0.893 \\ 
\midrule
\multirow {6}{*}{OBQA} & \multirow {2}{*}{Mistral-7B} & {logits}       & 0.655 & 0.298 & 0.085 & 0.602 & 0.705 \\ 
& & {reword}                                                           & 0.552 & 0.105 & 0.102 & 0.592 & 0.796 \\
\cdashline{2-8} \rule{0pt}{2.25ex}
& \multirow {2}{*}{Llama-2-7B} & {logits}                              & 0.478 & 0.277 & 0.147 & 0.758 & 0.642 \\ 
& & {expansion}                                                        & 0.362 & 0.083 & 0.138 & 0.775 & 0.678 \\ 
\cdashline{2-8} \rule{0pt}{2.25ex}
& \multirow {2}{*}{Llama-2-13B} & {logits}                             & 0.418 & 0.168 & 0.135 & 0.723 & 0.650 \\
& & {rephrase}                                                         & 0.428 & 0.095 & 0.14  & 0.729 & 0.73  \\ 
\bottomrule
\end{tabular}
\end{table*}
\section{Rephrasing works as well as having access to the last layer logits}

%We now explore possible explanations for the differences in performance among the different transformations. 
% We start by deriving a result  similar to Proposition \ref{main_theorem} for an \textit{unknown noise distribution in latent space} $\rho$, and then elucidate two important sources of error.
We now derive a proposition that elucidates why $p_A(\bx)$ results in calibrated estimates of uncertainty.

\begin{proposition}\label{theorem_general_noise}
Let $f : \mathcal{X} \rightarrow \mathcal{Y}$ be an LLM, $\bx$ is a base query and $\mathcal{T}(\bx)\sim \tau$ is some randomized transformation of the base query. Let
\begin{equation}
p_A(\bx) = \mathbb{P}\left( f(\mathcal{T}(\bx)) = A\right),
\end{equation}
be the probability of sampling the most probable answer $A \in \mathcal{Y}$ under transformations $\mathcal{T}(\bx)\sim \tau$. Let $\bz_{mean}+\epsilon_{rephrase}$ be the latent representation of $\bx$ under $\mathcal{T}(\bx)$ at the final LLM layer, where $\bz_{mean}$ is the mean representation and $\epsilon_{rephrase}$ is some additive noise. Let $\bw$ be the separating hyperplane between the most probable answer $A$ and the second most probable answer $B$. Assuming that $\bw^{\top} \epsilon_{rephrase}\sim \rho$ follows a logistic distribution with $\mu=0$ and $s=1$ then
\begin{equation}
\begin{split}
    p_A(\bx) = p(A\vert \bz_{mean} , f)
\end{split}
\end{equation}
where $p(A\vert \bz_{mean} , f)$ is the probability of $A$ given $\bz_{mean}$ under the categorical distribution of the final layer.
\end{proposition}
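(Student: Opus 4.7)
The plan is to reduce the top-1 decoding over multiple classes to a binary comparison between the two dominant candidates $A$ and $B$, and then use the closed-form CDF of the logistic distribution to recognize the sigmoid of the log-odds $\bw^{\top}\bz_{mean}$. That sigmoid is, in turn, the softmax probability of $A$ under the final-layer categorical distribution, so the identification $p_A(\bx) = p(A\vert \bz_{mean},f)$ follows.

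Concretely, I would first write the LLM's top-1 prediction as $f(\mathcal{T}(\bx)) = \arg\max_i \bw_i^{\top}(\bz_{mean} + \epsilon_{rephrase})$, where $\bw_i$ denotes the final linear read-out for class $i$. Under the premise that $A$ and $B$ are the two most probable answers and noise is not so large as to make some third class competitive, the event $\{f(\mathcal{T}(\bx)) = A\}$ is equivalent to the half-space event $\{\bw^{\top}(\bz_{mean} + \epsilon_{rephrase}) > 0\}$ with $\bw := \bw_A - \bw_B$, matching the separating hyperplane in the hypothesis. Then
\begin{equation*}
p_A(\bx) \;=\; \mathbb{P}\!\left(\bw^{\top}\epsilon_{rephrase} > -\bw^{\top}\bz_{mean}\right).
\end{equation*}

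Next, I would apply the assumption that $\bw^{\top}\epsilon_{rephrase}\sim\rho$ is standard logistic with $\mu=0$, $s=1$, whose CDF is $F_\rho(t) = 1/(1+e^{-t})$. By symmetry of the logistic law the above probability equals $1/(1+e^{-\bw^{\top}\bz_{mean}}) = \sigma(\bw^{\top}\bz_{mean})$. Finally, I would identify $\sigma(\bw^{\top}\bz_{mean})$ with the last-layer softmax restricted to the dominant pair: $\sigma(\bw_A^{\top}\bz_{mean} - \bw_B^{\top}\bz_{mean}) = e^{\bw_A^{\top}\bz_{mean}}/(e^{\bw_A^{\top}\bz_{mean}}+e^{\bw_B^{\top}\bz_{mean}})$, which under the same top-two approximation equals $p(A\vert \bz_{mean},f)$.

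The main obstacle is justifying the reduction from the full multi-class argmax to the single pairwise contest between $A$ and $B$. In general, $p_A(\bx)$ is the probability of a simultaneous collection of half-space events $\bigcap_{i\neq A}\{(\bw_A-\bw_i)^{\top}(\bz_{mean}+\epsilon_{rephrase}) > 0\}$, which does not collapse to a single logistic sigmoid unless the remaining logits are sufficiently suppressed (or the noise along each pairwise margin is jointly structured). I would therefore make this domination assumption explicit at the start of the argument, note that it also enters the claim $p(A\vert\bz_{mean},f)\approx\sigma(\bw^{\top}\bz_{mean})$ on the softmax side, and observe that the two approximations are of the same order, so that the identity in the conclusion holds exactly in the limit where classes other than $A,B$ have vanishing logits and holds approximately otherwise.
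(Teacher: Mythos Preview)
Your proposal is correct and follows essentially the same route as the paper: reduce to the binary contest between $A$ and $B$, express $p_A(\bx)$ as a half-space probability under the logistic noise assumption, and identify the resulting sigmoid with the two-class softmax at $\bz_{mean}$. The paper's version differs only in carrying along the bias terms $b_A,b_B$ and in appending a short technical lemma verifying that the Monte-Carlo argmax $A=\arg\max_i \mathbb{P}(f(\mathcal{T}(\bx))=i)$ coincides with the top-1 class at $\bz_{mean}$; you may wish to add both for completeness.
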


We prove the above for the binary case of two classes $A$ and $B$ in Appendix~\ref{app:proof}, but expect that it should be sufficiently informative in multi-class settings when $A,B$ are much more probable than other classes. A crucial assumption for recovering well-calibrated predictions is that $\bw^{\top} \epsilon_{rephrase}\sim \rho$ follows a logistic distribution with $\mu=0$ and $s=1$. We test this assumption by computing the cumulative of $\rho$ for our different experimental setups. In Figure \ref{fig:subfig3redun-topk} we find and plot the empirical cumulative using a Kolmogorov-Smirnov test \citep{smirnov1948table} and $S=100$ MC samples of $\rho$ for Mistral-7B, ARC-Challenge, and the ``expansion" rephrasing method. We see that the indeed the cumulative is approximately logistical validating our prediction (the confidence bands cover different queries $\bx$). In Table \ref{logits_vs_rephrasing-main} we use the logits of the answers as an oracle white-box uncertainty estimate. Specifically, we apply the softmax function and use the probability of the most probable class as our estimate of uncertainty. We compare the results of this method with the best rephrasing method (in terms of Brier) from Tables \ref{arc-challenge-main}, \ref{arc-easy-main} and \ref{openbookqa-main}. We see observe that our uncertainty estimates that are similar to what we would get if we had access to the last layer logits.

\section{For top-k decoding, rephrasing tempers predictive uncertainty}

\begin{figure*}[t!]
    \centering
    
    \begin{subfigure}{.25\textwidth}
    \includegraphics[width=\textwidth]{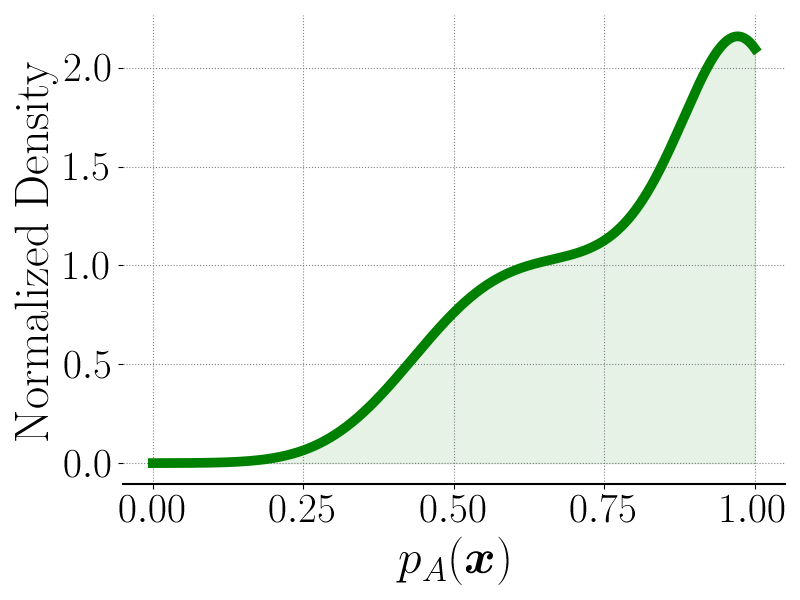}\\
    \caption{$p_A(\bx)$ for top-k without rephrasing}
    \label{fig:subfig1redun-topk}
    \end{subfigure}%
    \hspace{15pt}
    \begin{subfigure}{.25\textwidth}
    \includegraphics[width=\textwidth]{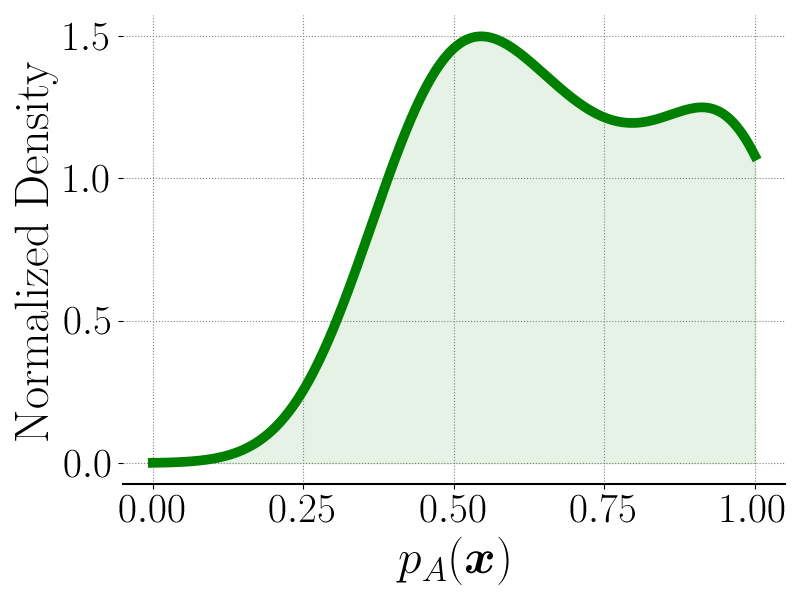}\\
    \caption{$p_A(\bx)$ for top-k with rephrasing}
    \label{fig:subfig2redun-topk}
    \end{subfigure}%
    \hspace{15pt}
    \begin{subfigure}{.25\textwidth}
    \includegraphics[width=\textwidth]{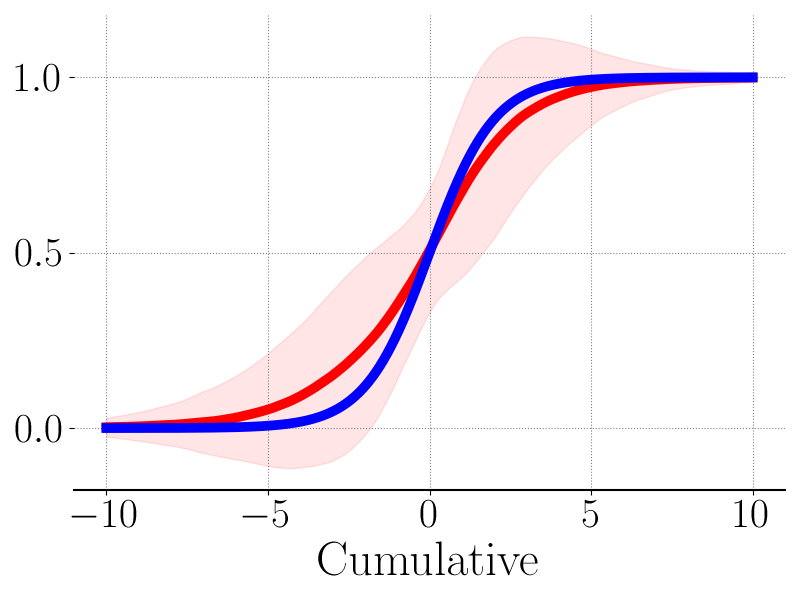}\\
    \caption{Logistic (blue), and empirical cdf (red)}
    \label{fig:subfig3redun-topk}
    \end{subfigure}%
    \caption{We plot the distribution of $p_A(\bx)$ for the case of top-k decoding with and without rephrasing, for all datasets, models, and rephrasing methods. We see that rephrasing primarily acts to temper the probability of the most probable class $A$, thus making the model less confident and possibly better calibrated. We also plot the logistic (blue), and empirical cdf (red) for $\bw^{\top} \epsilon_{rephrase}\sim \rho$ for Mistral-7B, ARC-Challenge, and the ``expansion" rephrasing method for top-1 decoding. $\rho$ is often close to a logistic distribution.}
    \label{fig:topk}
\end{figure*}
\begin{table*}[!t]
    \caption{Evaluation results on ARC-Challenge with various rephrasing methods applied to three LLMs using top-k decoding. In the majority of cases rephrasing + top-k outperforms simple top-k in terms of calibration. }
    \label{topk-chal}
\centering
\small
\begin{tabular}{cccccccc}
\toprule
Model & Rephrasing & Acc $\uparrow$ & ECE $\downarrow$ & TACE $\downarrow$ & Brier $\downarrow$  & AUROC $\uparrow$ & temp  \\
\toprule
 \multirow {5}{*}{Mistral-7B} & {top-k} & 0.746, & 0.272, & 0.091,     & 0.511, & 0.6, & -\\ 
& {temp-sampling} & 0.742 & 0.272 & 0.089 & 0.513 & 0.605 & - \\
 \cdashline{2-8} \rule{0pt}{2.25ex}
 & {reword} & 0.547, & \textbf{0.05}, & 0.093,      & 0.543, & \textbf{0.864}, & 1.5\\  
 & {rephrase} & 0.64, & 0.106, & \textbf{0.086},     & \textbf{0.485}, & 0.82, & 1.0\\  
 & {paraphrase} & 0.631, & 0.11, & 0.098,  & 0.495, & 0.83, & 1.0\\ 
 & {expansion} & 0.517, & 0.061, & 0.114,  & 0.573, & 0.859, & 1.5\\  
\midrule
\multirow {5}{*}{Llama-2-7B} & {top-k} & 0.436, & 0.201, & \textbf{0.139},     & \textbf{0.761}, & 0.602, & -\\
& {temp-sampling} & 0.441 & 0.211 & 0.132 & 0.757 & 0.621 & - \\
\cdashline{2-8} \rule{0pt}{2.25ex}
  & {reword} & 0.335, & 0.187, & 0.166,     & 0.858, & 0.62, & 1.5\\ 
  & {rephrase} & 0.356, & 0.314, & 0.17,     & 0.944, & 0.627, & 1.0\\  
  & {paraphrase} & 0.309, & 0.185, & 0.162, & 0.851, & \textbf{0.69}, & 1.5\\
  & {expansion} & 0.322, & \textbf{0.144}, & 0.155,  & 0.828, & 0.622, & 1.5\\  
\midrule
\multirow {5}{*}{Llama-2-13B} & {top-k} & 0.462, & 0.125, & \textbf{0.115},     & \textbf{0.679}, & \textbf{0.753}, & -\\ 
& {temp-sampling} & 0.47, & 0.122 & 0.115 & 0.662 & 0.766 & - \\
\cdashline{2-8} \rule{0pt}{2.25ex}
 & {reword} & 0.352, & 0.087, & 0.136,     & 0.771, & 0.687, & 1.5\\  
 & {rephrase} & 0.398, & \textbf{0.068}, & 0.136,    & 0.725, & 0.743, & 1.0\\  
 & {paraphrase} & 0.364, & 0.109, & 0.137, & 0.738, & 0.719, & 1.2\\ 
 & {expansion} & 0.373, & 0.124, & 0.143,  & 0.76,  & 0.669, & 1.5\\  
\bottomrule
\end{tabular}
\end{table*}
In practice, the assumptions of the above proposition are too restrictive. In particular, decoding in LLMs is performed with top-k decoding or nucleus sampling instead of top-1 decoding. Furthermore while for an oracle choice of the rephrasing intensity the modeling assumption that $\bw^{\top} \epsilon_{\eta}\sim \rho$ follows a logistic distribution with $\mu=0$ and $s=1$ might be correct, in general, the variance of the noise in latent space is unknown. It is thus illustrative to consider an extension of our toy model. The following proposition explores these extensions.

\begin{proposition}\label{theorem_topk_temperature_scaling}
Let $g : \mathbb{R}^{d_{\eta}} \rightarrow \mathcal{Y}$ be the final encoder layer of an LLM, $\bx$ is a base query and $\mathcal{T}(\bx)\sim \tau$ is some randomized transformation of the base query. Let
\begin{equation}
p_A(\bx) = \mathbb{P}\left( f(\mathcal{T}(\bx)) = A\right),
\end{equation}
be the probability of sampling the most probable answer $f(\bx)=A \in \mathcal{Y}$ under transformations $\mathcal{T}(\bx)\sim \tau$. Let $\bz_{mean}+\epsilon_{topk}+\epsilon_{rephrase}$ be the latent representation of $\bx$ under $\mathcal{T}(\bx)$ at the final LLM layer, where $\bz_{mean}$ is the mean representation and $\epsilon_{topk}$ is additive noise resulting from the top-k decoding and $\epsilon_{rephrase}$ is additive noise resulting from the rephrasings $\mathcal{T}(\bx)$. Assuming that $\bw^{\top} (\epsilon_{topk}+\epsilon_{rephrase})\sim \rho$ approximately follows a logistic distribution with $\mu=0$ and $s=\sqrt{s^2_{topk}+s^2_{rephrase}}$ then
\begin{equation}
\begin{split}
     p_A(\bx) \approx 0.5+\frac{1}{\sqrt{s^2_{topk}+s^2_{rephrase}}}(p(A\vert \bz_{mean} , f)-0.5)
\end{split}
\end{equation}
where $p(A\vert \bz_{mean} , f)$ is the probability of $A$ given $\bz_{mean}$ under the categorical distribution of $g$.
\end{proposition}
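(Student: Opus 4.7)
The plan is to proceed along the same lines as Proposition~\ref{theorem_general_noise} (binary simplification, CDF evaluation), but now carrying the scale parameter $s=\sqrt{s^2_{topk}+s^2_{rephrase}}$ through the argument and then linearizing the sigmoid around its midpoint. First I would reduce to the binary case with classes $A$ and $B$, letting $\bw$ be the separating hyperplane between them, exactly as in the earlier proof. Under the decomposition $\mathcal{T}(\bx)\mapsto \bz_{mean}+\epsilon_{topk}+\epsilon_{rephrase}$ in the penultimate layer, the event $f(\mathcal{T}(\bx))=A$ corresponds to $\bw^{\top}(\bz_{mean}+\epsilon_{topk}+\epsilon_{rephrase})>0$, which rewrites as $\bw^{\top}(\epsilon_{topk}+\epsilon_{rephrase})>-\bw^{\top}\bz_{mean}$.

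Next I would use the modeling assumption that $\bw^{\top}(\epsilon_{topk}+\epsilon_{rephrase})\sim\rho$ is (approximately) logistic with mean $0$ and scale $s=\sqrt{s^2_{topk}+s^2_{rephrase}}$, which is the natural ``variance-adds'' surrogate for the sum of two zero-mean logistic-like noises. Plugging this into the CDF gives
\begin{equation*}
p_A(\bx)=1-F_{\rho}(-\bw^{\top}\bz_{mean})=\sigma\!\left(\bw^{\top}\bz_{mean}/s\right),
\end{equation*}
where $\sigma$ is the standard logistic sigmoid. The final-layer categorical distribution produces $p(A\mid\bz_{mean},f)=\sigma(\bw^{\top}\bz_{mean})$, so writing $u=\bw^{\top}\bz_{mean}$ reduces the claim to the purely scalar identity $\sigma(u/s)\approx 0.5+s^{-1}(\sigma(u)-0.5)$.

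I would obtain this last identity by a first-order Taylor expansion of $\sigma$ around $0$, using $\sigma(t)=0.5+t/4+O(t^{3})$. This gives $\sigma(u)-0.5\approx u/4$ and $\sigma(u/s)-0.5\approx u/(4s)$, from which $\sigma(u/s)-0.5\approx s^{-1}(\sigma(u)-0.5)$ follows immediately. Combining with the previous step yields the stated approximation. The main obstacle, and the place I would spend most care, is twofold: (i) justifying the ``approximately logistic with scale $\sqrt{s^2_{topk}+s^2_{rephrase}}$'' hypothesis for the sum of two logistic-like variables (the convolution of two logistics is not exactly logistic, but it has the same mean and its variance adds, so a second-moment matching argument together with the empirical KS-style validation in Figure~\ref{fig:subfig3redun-topk} is the natural way to defend it); and (ii) controlling the Taylor remainder, since the linearization is most accurate when $|u|$ is moderate and $s\ge 1$, i.e.\ precisely the regime where topk and rephrasing jointly temper the predictions. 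I would flag that the conclusion should be read as a first-order tempering statement rather than an exact identity, consistent with how Proposition~\ref{theorem_general_noise} is used in the main text.
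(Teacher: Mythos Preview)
Your proposal is correct and follows essentially the same route as the paper: reduce to the binary case, use the logistic CDF with scale $s=\sqrt{s^2_{topk}+s^2_{rephrase}}$ to write $p_A(\bx)=\sigma(u/s)$ and $p(A\mid\bz_{mean},f)=\sigma(u)$, and then linearize the sigmoid around its midpoint. The paper's write-up differs only cosmetically (it composes $\sigma$ with $F^{-1}$ and Taylor-expands each factor separately, and it justifies the combined scale via the logistic$\approx$Gaussian heuristic), but the mathematical content is identical to yours.
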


\begin{table*}[!t]
    \caption{Evaluation results on ARC-Easy with various rephrasing methods applied to three LLMs using top-k decoding. In the majority of cases rephrasing + top-k outperforms simple top-k in terms of calibration.}
    \label{topk-easy}
\centering
\small
\begin{tabular}{cccccccc}
\toprule
Model & Rephrasing & Acc $\uparrow$ & ECE $\downarrow$ & TACE $\downarrow$ & Brier $\downarrow$  & AUROC $\uparrow$ & temp  \\
\toprule
 \multirow {6}{*}{Mistral-7B} & {top-k} & 0.868, & 0.133, & \textbf{0.042},     & \textbf{0.255}, & 0.695, & -\\
& {temp-sampling} & 0.859 & 0.131 & 0.046 & 0.266 & 0.677 & - \\
 \cdashline{2-8} \rule{0pt}{2.25ex}
 & {reword} & 0.694, & 0.054, & 0.076,     & 0.344, & 0.941, & 1.5\\ 
 & {rephrase} & 0.789, & 0.047, & 0.049,    & 0.274, & 0.911, & 1.0\\ 
 & {paraphrase} & 0.753, & \textbf{0.036}, & 0.056, & 0.3, & 0.922, & 1.0\\ 
 & {expansion} & 0.63, & 0.042, & 0.086,   & 0.403, & \textbf{0.942}, & 1.5\\ 
\midrule
\multirow {6}{*}{Llama-2-7B} & {top-k} & 0.612, & 0.25, & 0.115,      & 0.612, & 0.73, & -\\
& {temp-sampling} & 0.619 & 0.261 & 0.114 & 0.617 & 0.717 & - \\
\cdashline{2-8} \rule{0pt}{2.25ex}
  & {reword} & 0.401, & \textbf{0.074}, & 0.121,     & 0.681, & 0.825, & 1.5\\ 
  & {rephrase} & 0.564, & 0.145, & \textbf{0.108},    & \textbf{0.584}, & 0.819, & 1.0\\ 
  & {paraphrase} & 0.425, & 0.08, & 0.117,  & 0.665, & \textbf{0.835}, & 1.5\\  
  & {expansion} & 0.335, & 0.054, & 0.138,  & 0.742, & 0.791, & 1.5\\ 
\midrule
\multirow {6}{*}{Llama-2-13B} & {top-k} & 0.557, & 0.06, & \textbf{0.098},     & \textbf{0.528}, & \textbf{0.865}, & -\\
& {temp-sampling} & 0.544 & 0.087 & 0.107 & 0.532 & 0.866 & - \\
\cdashline{2-8} \rule{0pt}{2.25ex}
 & {reword} & 0.412, & 0.106, & 0.129,    & 0.72, & 0.741, & 1.5\\ 
 & {rephrase} & 0.458, & \textbf{0.05}, & 0.12,     & 0.643, & 0.817, & 1.0\\ 
 & {paraphrase} & 0.427, & 0.066, & 0.126,& 0.652, & 0.845, & 1.2\\  
 & {expansion} & 0.366, & 0.087, & 0.13,  & 0.74, & 0.75, & 1.5\\ 
\bottomrule
\end{tabular}
\end{table*}

\begin{table*}[!t]
    \caption{Evaluation results on OpenBookQA with various rephrasing methods applied to three LLMs using top-k decoding. In the majority of cases rephrasing + top-k outperforms simple top-k in terms of calibration.}
    \label{topk-opqa}
\centering
\small
\begin{tabular}{cccccccc}
\toprule
Model & Rephrasing & Acc $\uparrow$ & ECE $\downarrow$ & TACE $\downarrow$ & Brier $\downarrow$  & AUROC $\uparrow$ & temp  \\
\toprule
 \multirow {5}{*}{Mistral-7B} & {top-k} & 0.638, & 0.289, & 0.101,     & 0.636, & 0.636, & -\\
& {temp-sampling} & 0.668 & 0.289 & 0.098 & 0.607 & 0.624 & - \\
 \cdashline{2-8} \rule{0pt}{2.25ex}
 & {reword} & 0.528, & 0.103, & 0.105,     & 0.606, & 0.794, & 1.5\\  
 & {rephrase} & 0.582, & 0.109, & \textbf{0.093},    & \textbf{0.542}, & \textbf{0.821}, & 1.0\\  
 & {paraphrase} & 0.552, & 0.078, & 0.101, & 0.57, & 0.817, & 1.0\\  
 & {expansion} & 0.445, & \textbf{0.061}, & 0.128,  & 0.653, & 0.818, & 1.5\\  
\midrule
\multirow {5}{*}{Llama-2-7B} & {top-k} & 0.412, & 0.208, & \textbf{0.129},     & \textbf{0.776}, & 0.617, & -\\
& {temp-sampling} & 0.442 & 0.235 & 0.13  & 0.772 & 0.599 & - \\
\cdashline{2-8} \rule{0pt}{2.25ex}
  & {reword} & 0.34, & 0.14, & 0.153,       & 0.807, & 0.696, & 1.5\\ 
  & {rephrase} & 0.408, & 0.239, & 0.154,    & 0.815, & 0.704, & 1.0\\  
  & {paraphrase} & 0.355, & 0.127, & 0.145, & 0.783, & \textbf{0.721}, & 1.5\\  
  & {expansion} & 0.308, & \textbf{0.098}, & 0.151,  & 0.807, & 0.711, & 1.5\\ 
\midrule
\multirow {5}{*}{Llama-2-13B} & {top-k} & 0.43, & 0.114, & \textbf{0.13},      & \textbf{0.708}, & \textbf{0.72}, & -\\
& {temp-sampling} & 0.43, & 0.099 & 0.121 & 0.702 & 0.733 & - \\
\cdashline{2-8} \rule{0pt}{2.25ex}
 & {reword} & 0.345, & 0.111, & 0.144,    & 0.794, & 0.618, & 1.5\\  
 & {rephrase} & 0.345, & \textbf{0.062}, & 0.141,   & 0.767, & 0.706, & 1.0\\  
 & {paraphrase} & 0.37, & 0.092, & 0.141, & 0.763, & 0.67, & 1.2\\ 
 & {expansion} & 0.36, & 0.138, & 0.138,  & 0.799, & 0.574, & 1.5\\  
\bottomrule
\end{tabular}
\end{table*}

The approximation relies on linearizing the involved functions, however, it is illustrative of the effect of both $s^2_{topk}$ and $s^2_{rephrase}$. In particular, we see that both $s^2_{topk}$ and $s^2_{rephrase}$ act to \emph{temper} the probability $p(A\vert \bz_{mean} , f)$ under the categorical distribution of g. This highlights why using rephrasings with an appropriate temperature might improve the calibration in downstream tasks. In previous works, tempering of the categorical distribution has been found to significantly improve the calibration of deep neural networks \citep{guo2017calibration}. %This also highlights a potential limitation of rephrasings in improving calibration. In particular, we expect that if we temper with an appropriate temperature the results obtained by top-k alone we should be able to match the best performance of rephrasings.

%\section{Experiments on top-k decoding}
In Tables \ref{topk-chal}, \ref{topk-easy} and \ref{topk-opqa} and Figure \ref{fig:topk}, we present the results for the top-k experiments with and without rephrasing, with $k=40$. We also present the relaxed temperature sampling variant \cite{wei2022chain}. We see that the stochasticity of top-40 compared to top-1 decoding from Tables \ref{arc-challenge-main}, \ref{arc-easy-main} and \ref{openbookqa-main} \emph{results in an improvement in calibration but a drop in accuracy. The Brier score often improves at the cost of accuracy}. Further stochasticity in answers caused by rephrasings has a similar effect. These observations are consistent with the fact that top-k and nucleus sampling \citep{holtzman2019curious} make text more human-like but not necessarily more ``accurate". However, if the main goal is calibration, the tables, and Figure \ref{fig:topk} show that in accordance with proposition \ref{theorem_topk_temperature_scaling} rephrasing acts primarily to temper the probability of the top class. This often improves calibration significantly in terms of ECE, and AUROC especially for smaller models. We plot the results of all methods averaged over all models for each dataset in Figure \ref{fig:fullbar}. \emph{Tables \ref{arc-challenge-main}, \ref{arc-easy-main} \ref{openbookqa-main} and \ref{topk-chal}, \ref{topk-easy} \ref{topk-opqa} and Figure \ref{fig:fullbar} indicate that the user should assess whether rephrasing is appropriate after an analysis of his individual model, task and evaluation metric. However, in general, a hyperparameter-optimized choice of rephrasing + top-1 decoding outperforms or matches all other method combinations in all metrics.}

\section{Related works}
The field of estimating the uncertainty of closed-source LLM models is nascent but fast-growing. \cite{kadavath2022language} propose that in addition to a query the user can prompt the LLM to output a numerical confidence value, known as ``verbalized confidence". Crucially, there is no easy statistical justification as to why verbalized confidence should result in calibrated predictions. Uncertainty estimates using verbalized confidence tend to be overly optimistic and concentrate in the 80\%-100\% confidence range \citep{xiong2023can}. %Closer to our approach \cite{} propose to use consistensy based methods for estimating uncertainty. They explore generating multiple responses using only the base query and an above zero LLM temperature which introduces randomness. They also explore concatenating the base query with misleading hints. %We first note that the results of this work have to be treated with caution. \cite{} compare different methods using only the ECE and variants of the AUROC. It is well know that since these metrics are not proper scoring rules, their optimal values can be achieved by classifiers which have no predictive ability but are still "well calibrated". Furthermore, 
%Concatenating a base query with hints requires the memorization of the proposed hint format and is arguably cumbersome to use in practice. By contrast we propose simple rephrasing rules to generate multiple queries which are much easier to use in practice. Finally, in contrast to \cite{}, we also propose a theoretical model that elucidates why using multiple rephrased queries, can result in improved uncertainty estimates. 
Recently, \cite{pacchiardi2023catch} proposed that after submitting a base query the user should ask additional and unrelated binary questions and check the accuracy of the answers. They empirically correlate this to well-calibrated uncertainty but only for the setting where the LLM purposefully lies. Our work is also related to \cite{carlini2024stealing} which manages to ``steal" the last layer of closed-source LLMs using only random queries. 

\citet{wang2022self} proposed to leverage multiple chains of thoughts to derive varied responses. Their findings suggest that a majority vote across these answers not only enhances accuracy but also yields well-calibrated uncertainty estimates. \citet{kuhn2023semantic} introduced a novel uncertainty quantification metric by sampling a multitude of responses and employing a BERT model to categorize these answers. Subsequently, they calculated the entropy of the empirical distribution, presenting an alternative approach to uncertainty estimation. This approach has the significant disadvantage of being computationally expensive and requiring access to a secondary LLM.

Another line of work focuses on rephrasing queries to improve accuracy instead of estimating uncertainty. Specifically, \citet{deng2023rephrase} demonstrated that expanding questions with supplementary details through a zero-shot prompt significantly improves model performance. \citet{zheng2023take} adopted a similar approach by asking the LLM to derive high-level concepts and first principles before reasoning and answering the question, which boosted the performance.

Conversely, another segment of research has delved into the uncertainty estimates derived from the logits associated with multiple-choice questions. This approach entails extracting the logits corresponding to the first token of each option (A, B, C, D) following the question prompt and applying a softmax normalization to ascertain the predicted probabilities for the options. \citet{achiam2023gpt} discovered that while pre-trained models exhibit commendable calibration, the application of Reinforcement Learning from Human Feedback (RLHF) adversely affects calibration. Other studies have endeavored to bolster the calibration of fine-tuned LLMs by employing strategies such as ensembles \citep{wang2023lora} or adopting Bayesian methods \citep{yang2023bayesian}. However, such an approach does not apply to closed-source LLMs where logits are not available, as well as free-form QA tasks.

A comprehensive body of literature exists on the topic of estimating uncertainty in deep neural network models when access to the softmax categorical distribution is available \citep{guo2017calibration, lakshminarayanan2016simple, blundell2015weight, maddox2019simple, wenzel2020good, arbel2023primer}. The most straightforward method involves utilizing the categorical distribution itself as an uncertainty estimate \citep{guo2017calibration}. Noteworthy enhancements can be achieved by applying tempering to the logits just before the application of the softmax function \citep{guo2017calibration}.

%Various advanced methods have been proposed to refine uncertainty estimates, including techniques like deep ensembles \citep{lakshminarayanan2016simple,wang2023lora} and Bayesian approaches, such as the Laplace approximation \citep{ritter2018scalable,yang2023bayesian,yang2024bayesian}, Markov chain Monte Carlo sampling \citep{wenzel2020good}, and Variational Inference \citep{blundell2015weight}. For an extensive overview of these methods, refer to \citet{arbel2023primer}. Fundamentally, these approaches share a common principle of averaging the softmax categorical distribution over multiple minima of the loss, contributing to improved uncertainty estimation in deep neural networks.

\begin{figure*}[t!]
    \centering
    
    \begin{subfigure}{.33\textwidth}
    \includegraphics[width=\textwidth]{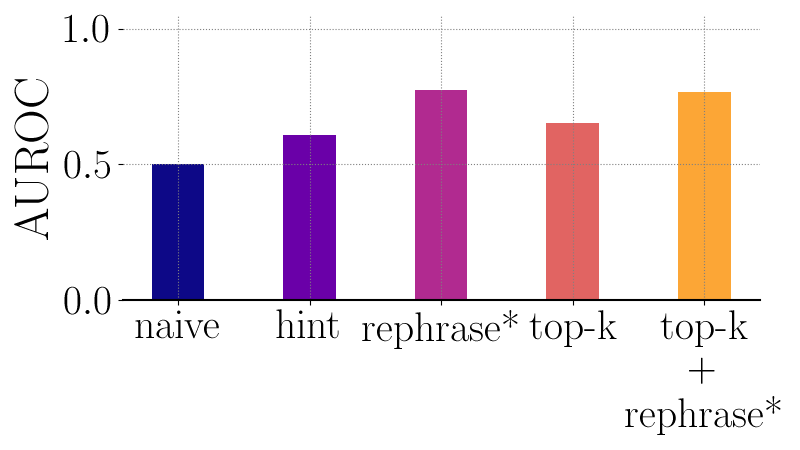}\\
    \caption{ARC-Challenge}
    \label{fig:sub1bar}
    \end{subfigure}%
    %\hspace{15pt}
    \begin{subfigure}{.33\textwidth}
    \includegraphics[width=\textwidth]{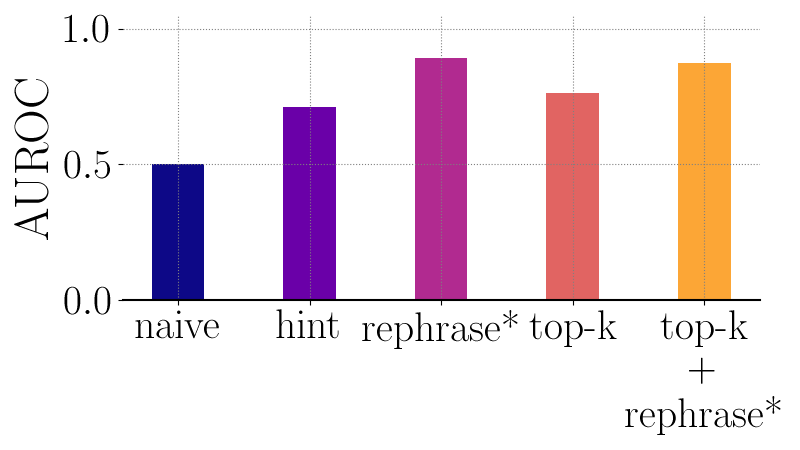}\\
    \caption{ARC-Easy}
    \label{fig:sub2bar}
    \end{subfigure}%
    %\hspace{15pt}
    \begin{subfigure}{.33\textwidth}
    \includegraphics[width=\textwidth]{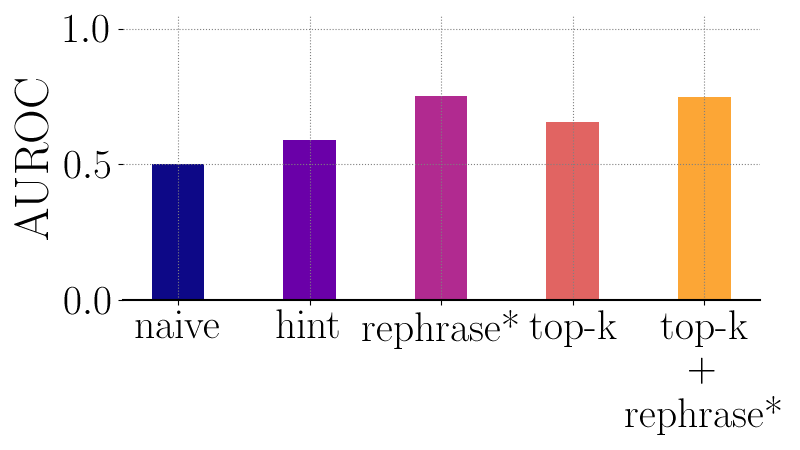}\\
    \caption{OpenBookQA}
    \label{fig:sub3bar}
    \end{subfigure}%
    \caption{We plot the AUROC averaged over all models for each dataset and for each uncertainty estimation method. We observe that top-k improves over the naive top-1 decoding. Furthermore, the best rephrasing method (denoted as rephrase*) improves the AUROC significantly in all cases.}
    \label{fig:fullbar}
\end{figure*}

\section{Discussion}
We conducted a thorough analysis of rephrased queries as a method for obtaining calibrated predictions from closed-source LLM models. Notably, we found that two simple methods; making the query more verbose, and substituting words with their synonyms, provide a straightforward means of identifying false positives. The appeal of our approach lies in its practicality, as it requires only basic language and arithmetic skills by the end user to obtain meaningful uncertainty estimates. Exciting future directions include learning optimal rephrasing rules in a data-driven manner, to be used in conjunction with a rephrasing LLM. While we tested on the multiple choice question setting for ease of evaluation, we expect our results to also hold for open-ended text generation.

\clearpage

\bibliography{colm2024_conference}
\bibliographystyle{colm2024_conference}

\appendix

\section{Prompt template} \label{app:prompt}
We present our prompt template for initiating rephrases with a one-shot example in Table~\ref{table:prompt}. Note that we only present and rephrase questions without revealing choices, to reduce unnecessary bias to rephrases when presented with answer choices.

\begin{table}[H]
\centering
\captionsetup{skip=8pt}
\begin{tabularx}{\textwidth}{c|X}
\hline
\textbf{Method} & \textbf{Prompt} \\
\hline
reword & \textcolor{UserPromptColor}{[INST]Reword the following question:} \\
& \textcolor{UserPromptColor}{George wants to warm his hands quickly by rubbing them. Which skin surface will produce the most heat?} \\
& \textcolor{UserPromptColor}{Respond with the reworded question only: [\textbackslash INST]} \\
& \textcolor{ModelResponseColor}{George seeks to heat his hands swiftly by rubbing them. Which skin area will generate the maximum heat?} \\
& \textcolor{UserPromptColor}{[INST]Reword the following question: \{question\}} \\
& \textcolor{UserPromptColor}{Respond with the reworded question only: [\textbackslash INST]} \\
\hline
rephrase & \textcolor{UserPromptColor}{[INST]Rephrase the following question:} \\
& \textcolor{UserPromptColor}{George wants to warm his hands quickly by rubbing them. Which skin surface will produce the most heat?} \\
& \textcolor{UserPromptColor}{Respond with the rephrased question only: [\textbackslash INST]} \\
& \textcolor{ModelResponseColor}{What type of skin texture on George's hands would generate the most heat through rapid rubbing to warm them effectively?} \\
& \textcolor{UserPromptColor}{[INST]Rephrase the following question: \{question\}} \\
& \textcolor{UserPromptColor}{Respond with the rephrased question only: [\textbackslash INST]} \\
\hline
paraphrase & \textcolor{UserPromptColor}{[INST]Semantically paraphrase the following question:} \\
& \textcolor{UserPromptColor}{George wants to warm his hands quickly by rubbing them. Which skin surface will produce the most heat?} \\
& \textcolor{UserPromptColor}{Respond with the semantically paraphrased question only: [\textbackslash INST]} \\
& \textcolor{ModelResponseColor}{How can George induce the highest thermal output by briskly rubbing his hands, and which part of the skin would be most effective?} \\
& \textcolor{UserPromptColor}{[INST]Semantically paraphrase the following question: \{question\}} \\
& \textcolor{UserPromptColor}{Respond with the semantically paraphrased question only: [\textbackslash INST]} \\
\hline
expansion & \textcolor{UserPromptColor}{[INST] Expand the following question with additional context:} \\
& \textcolor{UserPromptColor}{George wants to warm his hands quickly by rubbing them. Which skin surface will produce the most heat?}  \\
& \textcolor{UserPromptColor}{Respond with the expanded question only: [\textbackslash INST]} \\
& \textcolor{ModelResponseColor}{In the context of seeking immediate relief from the biting cold and understanding the mechanisms behind heat generation through friction, what type of skin texture on George's hands would most effectively generate heat by rapid rubbing?} \\
& \textcolor{UserPromptColor}{[INST]Expand the following question with additional context: \{question\}} \\
& \textcolor{UserPromptColor}{Respond with the expanded question only: [\textbackslash INST]}\\
\hline
\end{tabularx}
\caption{Prompt templates for one-shot rephrasing, with rephrasing methods listed on the left and corresponding prompt on the right. The user instructions are colored in \textcolor{UserPromptColor}{blue} and surrounded by the instruction token, whereas model response demonstrations are colored in \textcolor{ModelResponseColor}{orange}.}
\label{table:prompt}
\end{table}

We followed the instructions in \cite{xiong2023can} to generate ''hint" based rephrasings. Specifically, to generate a rephrased query given a base query, we appended one of the following three weak claims (as they found weak claims outperform other types of hints) together with a random class from the available ones.

\begin{table*}[!t]
    \caption{Different Prompts to Induce Consistency Confidence}
    \label{different_hints}
\centering
\begin{tabular}{cl}
\toprule
 & Consistency Confidence Inducing Prompts  \\
\toprule
Hint1 & I think the answer should be \\ 
Hint2 & I read online the answer is \\
Hint3 & I vaguely remember the answer is \\ 
\bottomrule
\end{tabular}
\end{table*}

\section{Rephrase generations} \label{app:generations}
In the main text, we present rephrases generated by Mistral-7B in Table~\ref{table:generation_mistral}. Here, we present additional generated rephrasings by Llama2-7B and Llama2-13B in Table~\ref{table:generation_llama2_7b} and Table~\ref{table:generation_llama2_13b}.

\begin{table}[t]
\centering
\captionsetup{skip=8pt}
\begin{tabularx}{\textwidth}{c|X}
\hline
\textbf{Method} & \textbf{Question} \\
\hline\hline
original & What part of the digestive system first causes chemical changes to food?
A. Teeth in the mouth. B. Saliva in the mouth. C. Enzymes in the stomach.
D. Enzymes in the small intestine. \\
\hline\hline
reword & What section of the digestive system initiates chemical alterations to sustenance? A. Teeth in the mouth. B. Saliva in the mouth. C. Enzymes in the stomach.
D. Enzymes in the small intestine.\\
\hline
rephrase & Which portion of the digestive system initially catalyzes the biochemical transformation
of ingested sustenance? A. Teeth in the mouth. B. Saliva in the mouth. C. Enzymes in the stomach.
D. Enzymes in the small intestine.\\
\hline
paraphrase &  Which digestive organ releases enzymes that initiate chemical breakdown within ingested
sustenences, leading to nutrient extraction and energy release? A. Teeth in the mouth. B. Saliva in the mouth. C. Enzymes in the stomach.
D. Enzymes in the small intestine.\\
\hline
expansion & In the context of the digestive process and the breakdown of nutrients, which portion
of the digestive system initiates the chemical transformations that result in the
nutrient absorption and energy production? A. Teeth in the mouth. B. Saliva in the mouth. C. Enzymes in the stomach.
D. Enzymes in the small intestine. \\
\hline
\end{tabularx}
\caption{Rephrasing examples generated by Llama2-7B, with rephrasing methods listed on the left and corresponding rephrases on the right.}
\label{table:generation_llama2_7b}
\end{table}

\begin{table}[t]
\centering
\captionsetup{skip=8pt}
\begin{tabularx}{\textwidth}{c|X}
\hline
\textbf{Method} & \textbf{Question} \\
\hline\hline
original & What part of the digestive system first causes chemical changes to food?
A. Teeth in the mouth. B. Saliva in the mouth. C. Enzymes in the stomach.
D. Enzymes in the small intestine. \\
\hline\hline
reword & Which section of the gastrointestinal tract initiates the chemical transformation
of sustenance? A. Teeth in the mouth. B. Saliva in the mouth. C. Enzymes in the stomach.
D. Enzymes in the small intestine.\\
\hline
rephrase & In which section of the digestive system does the initial chemical breakdown of
food occur? A. Teeth in the mouth. B. Saliva in the mouth. C. Enzymes in the stomach.
D. Enzymes in the small intestine.\\
\hline
paraphrase &  In the digestive process, where do crucial transformations initially occur to break
down nutrients? A. Teeth in the mouth. B. Saliva in the mouth. C. Enzymes in the stomach.
D. Enzymes in the small intestine.\\
\hline
expansion & Taking into account that human digestive system's several organs coordinate to breakdown,
absorb, and expel waste, which part of the gastrointestinal system would have the
most significant logic-based influence on the breakdown of food into usable components,
prior to the nutrient absorption? A. Teeth in the mouth. B. Saliva in the mouth. C. Enzymes in the stomach.
D. Enzymes in the small intestine. \\
\hline
\end{tabularx}
\caption{Rephrasing examples generated by Llama2-13B, with rephrasing methods listed on the left and corresponding rephrases on the right.}
\label{table:generation_llama2_13b}
\end{table}

\section{Additional Proofs} \label{app:proof}

\begin{proposition} %\label{theorem_general_noise}
Let $f : \mathcal{X} \rightarrow \mathcal{Y}$ be an LLM, $\bx$ is a base query and $\mathcal{T}(\bx)\sim \tau$ is some randomized transformation of the base query. Let
\begin{equation}
p_A(\bx) = \mathbb{P}\left( f(\mathcal{T}(\bx)) = A\right),
\end{equation}
be the probability of sampling the most probable answer $A \in \mathcal{Y}$ under transformations $\mathcal{T}(\bx)\sim \tau$. Let $\bz_{mean}+\epsilon_{rephrase}$ be the latent representation of $\bx$ under $\mathcal{T}(\bx)$ at the final LLM layer, where $\bz_{mean}$ is the mean representation and $\epsilon_{rephrase}$ is some additive noise. Let $\bw$ be the separating hyperplane between the most probable answer $A$ and the second most probable answer $B$. Assuming that $\bw^{\top} \epsilon_{rephrase}\sim \rho$ follows a logistic distribution with $\mu=0$ and $s=1$ then
\begin{equation}
\begin{split}
    p_A(\bx) = p(A\vert \bz_{mean} , f)
\end{split}
\end{equation}
where $p(A\vert \bz_{mean} , f)$ is the probability of $A$ given $\bz_{mean}$ under the categorical distribution of the final layer.
\end{proposition}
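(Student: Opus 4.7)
I would restrict to the binary case as suggested in the text (classes $A$ and $B$, with $A$ the most probable at $\bz_{mean}$), since $\bw$ is defined precisely as the separating hyperplane between these two. The plan is to translate the event $\{f(\mathcal{T}(\bx)) = A\}$ into a scalar inequality involving $\bw$ and $\epsilon_{rephrase}$, then invoke the logistic CDF to recover the sigmoid, and finally identify this sigmoid with $p(A\vert \bz_{mean}, f)$.

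\paragraph{Step 1: Reduce the event to a halfspace.} Under top-1 decoding at the final layer, the predicted class on input $\mathcal{T}(\bx)$ is obtained by an argmax over the logits $\bw_i^{\top}(\bz_{mean}+\epsilon_{rephrase}) + b_i$. In the binary $\{A,B\}$ reduction, the decision is $A$ iff $(\bw_A-\bw_B)^{\top}(\bz_{mean}+\epsilon_{rephrase}) + (b_A-b_B) > 0$. Orienting $\bw$ and absorbing bias so that this halfspace condition reads $\bw^{\top}(\bz_{mean}+\epsilon_{rephrase}) > 0$, we obtain
\begin{equation}
p_A(\bx) \;=\; \mathbb{P}\!\left(\bw^{\top}\epsilon_{rephrase} \,>\, -\bw^{\top}\bz_{mean}\right).
\end{equation}

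\paragraph{Step 2: Apply the logistic CDF.} By the standing assumption, the scalar random variable $U:=\bw^{\top}\epsilon_{rephrase}$ is logistic with $\mu=0$, $s=1$, whose CDF is $F(t)=1/(1+e^{-t})$. Hence
\begin{equation}
p_A(\bx) \;=\; 1 - F\!\left(-\bw^{\top}\bz_{mean}\right) \;=\; \frac{1}{1+e^{-\bw^{\top}\bz_{mean}}} \;=\; \sigma\!\left(\bw^{\top}\bz_{mean}\right).
\end{equation}

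\paragraph{Step 3: Identify with the softmax probability.} In the binary reduction, the categorical distribution at $\bz_{mean}$ gives
\begin{equation}
p(A\vert \bz_{mean}, f) \;=\; \frac{e^{\bw_A^{\top}\bz_{mean}+b_A}}{e^{\bw_A^{\top}\bz_{mean}+b_A}+e^{\bw_B^{\top}\bz_{mean}+b_B}} \;=\; \sigma\!\left(\bw^{\top}\bz_{mean}\right),
\end{equation}
using the same $\bw$ and absorbed bias as in Step 1. Comparing with Step 2 yields $p_A(\bx)=p(A\vert \bz_{mean},f)$, as required.

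\paragraph{Main obstacle.} The calculation is essentially one line once the event is rewritten as a halfspace; the substantive point is conceptual rather than technical. The delicate step is the reduction to a single separating hyperplane $\bw$: it is exact in the binary case and only approximate in the multiclass setting, since the argmax event $\{f(\mathcal{T}(\bx))=A\}$ is the intersection of halfspaces against every other class, not just $B$. The proposition is stated for the binary case for this reason, and the justification of the multiclass approximation (when $A,B$ dominate the other logits) is where the only real subtlety lies; the logistic-to-sigmoid identity in Step 2 is otherwise immediate.
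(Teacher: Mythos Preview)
Your argument is correct and mirrors the paper's proof almost line for line: reduce the binary softmax to a single hyperplane, rewrite $\{f(\mathcal{T}(\bx))=A\}$ as a halfspace in $\bw^{\top}\epsilon_{rephrase}$, and invoke the logistic CDF to recover the sigmoid. The only thing the paper adds that you skip is a short consistency check showing that the class $A$ defined as $\mathrm{argmax}_i\,\mathbb{P}(f(\mathcal{T}(\bx))=i)$ coincides with the top-1 class at $\bz_{mean}$ (you tacitly assume this by taking ``$A$ the most probable at $\bz_{mean}$''), which follows immediately from the symmetry of the logistic distribution about zero.
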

\begin{proof}
We first analyze the categorical distribution, resulting from applying the softmax on the final layer logits. In the binary classification case given a top-1 class prediction $A$, the softmax probability of this class is
\begin{align}
    &p(A\vert \bx , f) = \frac{e^{\bw_{A}^{\top}\bz+b_A}}{e^{\bw_{A}^{\top}\bz+b_A}+e^{\bw_{B}^{\top}\bz+b_B}}\nonumber\\
    &\,\,=\frac{1}{1+e^{-(\bw_{A}+b_A-\bw_{B}-b_B)^{\top}\bz}}=\frac{1}{1+e^{-(\bw^{\top}\bz+b)}}.\label{eq:folk_app}
\end{align}
The above simply corresponds to the folk knowledge that a softmax layer with two classes is equivalent to a single separating hyperplane that assigns classes based on the rule $\sign \left(\bw^{\top}\bz+b\right)$, specifically 
$$g(\bz) = \begin{cases}
    A & \text{if } \left(\bw^{\top}\bz+b\right) > 0, \\
    B & \text{otherwise.}
\end{cases}$$ 
After establishing that the softmax layer is equivalent to this single separating hyperplane, let us relate $p_A(\bx)$ to $\bw^{\top}\bz+b$. We have
\begin{equation}
\begin{split}
p_A(\bx) &=\mathbb{P}\left( f(\mathcal{T}(\bx)) = A \right)\\
&= \mathbb{P}\left( \bw^{\top}(\bz_{mean}+\epsilon_{rephrase})+b > 0 \right)\\
&= \mathbb{P}\left( \bw^{\top}\bz_{mean}+\bw^{\top}\epsilon_{rephrase}+b > 0 \right)\\
&= \mathbb{P}\left( Z>-\bw^{\top}\bz_{mean}-b\right)\\
&= 1-\mathbb{P}\left( Z<-\bw^{\top}\bz_{mean}-b\right)\\
&= 1-F\left(-\bw^{\top}\bz_{mean}-b\right)\\
\end{split}
\end{equation}
Then $F(-\bw^{\top}\bz_{mean}-b) = 1-p_A \iff \bw^{\top}\bz_{mean}+b = -F^{-1}(1-p_A)$. We substitute this result to \ref{eq:folk_app}, assume that $F$ is the cumulative of the logistic distribution with $\mu=0$ and $s=1$ and get
\begin{align}
    p(A\vert \bz_{mean} , f) &= \frac{1}{1+e^{F^{-1}(1-p_A)}}\\
    & = \frac{1}{1+e^{-F^{-1}(p_A(\bx))}}\\
    & = p_A(\bx)
\end{align}
In the second line we used the fact that the inverse cumulative $F^{-1}$ of the logistic distribution is symmetric around $0.5$. In the third line we use the fact that $\frac{1}{1+e^{-x}}$ is the cumulative of the logistic with $\mu=0$ and $s=1$. Thus $p(A\vert \bz_{mean} , f) = F(F^{-1}(p_A(\bx))) \iff p(A\vert \bz_{mean} , f)=p_A(\bx)$

A technical point remains. Even though in the previous we can assume that $g(\bz_{mean})=A$ (that $\bz_{mean}$ results in the most probable class) by definition, we still need to show that $A = \mathrm{argmax}_i \mathbb{P}\left( f(\mathcal{T}(\bx)) = i \right) \iff g(\bz_{mean})=A$. This means that for a closed-source LLM we can identify the (unknown) top-1 class A through Monte Carlo sampling ($A = \mathrm{argmax}_i \mathbb{P}\left( f(\mathcal{T}(\bx)) = i \right)$).

\begin{equation}
\begin{split}
A = \mathrm{argmax}_i \mathbb{P}\left( f(\mathcal{T}(\bx)) = i \right) &\iff \mathbb{P}\left( f(\mathcal{T}(\bx)) = A \right)>\frac{1}{2}\\
&\iff \mathbb{P}\left( \bw^{\top}(\bz_{mean}+\epsilon_{rephrase})+b \geq 0 \right)>\frac{1}{2}\\
&\iff \mathbb{P}\left( \bw^{\top}\bz_{mean}+\bw^{\top}\epsilon_{rephrase}+b \geq 0 \right)>\frac{1}{2}\\
&\iff \mathbb{P}\left( Z\geq-\bw^{\top}\bz_{mean}-b\right)>\frac{1}{2}\\
&\iff \mathbb{P}\left( Z\leq\bw^{\top}\bz_{mean}+b\right)>\frac{1}{2}\\
&\iff \bw^{\top}\bz_{mean}+b > 0\\
&\iff g(\bz_{mean})=A
\end{split}
\end{equation}
where we use the assumption that $Z$ follows a logistic distribution with $\mu=0$ and $s=1$.
\end{proof}

\begin{proposition} %\label{theorem_topk_temperature_scaling}
Let $g : \mathbb{R}^{d_{\eta}} \rightarrow \mathcal{Y}$ be the final encoder layer of an LLM, $\bx$ is a base query and $\mathcal{T}(\bx)\sim \tau$ is some randomized transformation of the base query. Let
\begin{equation}
p_A(\bx) = \mathbb{P}\left( f(\mathcal{T}(\bx)) = A\right),
\end{equation}
be the probability of sampling the most probable answer $f(\bx)=A \in \mathcal{Y}$ under transformations $\mathcal{T}(\bx)\sim \tau$. Let $\bz_{mean}+\epsilon_{topk}+\epsilon_{rephrase}$ be the latent representation of $\bx$ under $\mathcal{T}(\bx)$ at the final LLM layer, where $\bz_{mean}$ is the mean representation and $\epsilon_{topk}$ is additive noise resulting from the top-k decoding and $\epsilon_{rephrase}$ is additive noise resulting from the rephrasings $\mathcal{T}(\bx)$. Assuming that $\bw^{\top} (\epsilon_{topk}+\epsilon_{rephrase})\sim \rho$ approximately follows a logistic distribution with $\mu=0$ and $s=\sqrt{s^2_{topk}+s^2_{rephrase}}$ then
\begin{equation}
\begin{split}
     p_A(\bx) \approx 0.5+\frac{1}{\sqrt{s^2_{topk}+s^2_{rephrase}}}(p(A\vert \bz_{mean} , f)-0.5)
\end{split}
\end{equation}
where $p(A\vert \bz_{mean} , f)$ is the probability of $A$ given $\bz_{mean}$ under the categorical distribution of $g$.
\end{proposition}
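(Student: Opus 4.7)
The plan is to follow the template of the proof of Proposition~\ref{theorem_general_noise} as closely as possible, reducing to the binary softmax-as-hyperplane setup, then handle the non-unit scale parameter via a first-order Taylor expansion of the sigmoid, which is where the ``$\approx$'' in the statement comes from.

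First I would reduce to binary classification between the top two classes $A$ and $B$, and reuse the identity from Proposition~\ref{theorem_general_noise} that the softmax layer is equivalent to a single separating hyperplane $\bw$ with bias $b$, giving $p(A\vert \bz_{mean}, f) = \sigma(\bw^{\top}\bz_{mean}+b)$, where $\sigma(\cdot)$ is the standard logistic sigmoid. Next I would rewrite $p_A(\bx) = \mathbb{P}\!\left( \bw^{\top}(\bz_{mean}+\epsilon_{topk}+\epsilon_{rephrase})+b > 0 \right)$, and use the assumption that $Z = \bw^{\top}(\epsilon_{topk}+\epsilon_{rephrase})$ is logistic with location $0$ and scale $s = \sqrt{s^2_{topk}+s^2_{rephrase}}$. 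By symmetry of the logistic, $\mathbb{P}(Z > -u) = \mathbb{P}(Z < u) = F_s(u)$ where $F_s(u) = 1/(1+e^{-u/s}) = \sigma(u/s)$. Writing $u := \bw^{\top}\bz_{mean}+b$, this yields the two identities
\begin{equation}
p(A\vert \bz_{mean}, f) = \sigma(u), \qquad p_A(\bx) = \sigma(u/s).
\end{equation}
The argmax-identification step from Proposition~\ref{theorem_general_noise} carries over unchanged, so the top-1 class $A$ we recover through Monte Carlo sampling is indeed the same $A$ as the one determined by the sign of $u$.

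The final step is the linearization that produces the claimed approximation. Since $\sigma'(0) = 1/4$, a first-order Taylor expansion around $u=0$ gives $\sigma(u) \approx 1/2 + u/4$ and $\sigma(u/s) \approx 1/2 + u/(4s)$. Eliminating $u$ between the two expressions yields
\begin{equation}
p_A(\bx) - \tfrac{1}{2} \;\approx\; \tfrac{1}{s}\bigl(\,p(A\vert \bz_{mean}, f) - \tfrac{1}{2}\bigr),
\end{equation}
which is exactly the statement with $s = \sqrt{s^2_{topk}+s^2_{rephrase}}$. This linearization is also what makes the proposition only approximate rather than exact, in contrast to Proposition~\ref{theorem_general_noise}.

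The main obstacle, and the main source of the word ``approximately'' in two places in the statement, is really twofold. First, a sum of two independent logistic random variables is not itself logistic, so the assumption that $\bw^{\top}(\epsilon_{topk}+\epsilon_{rephrase})$ is logistic with scale $\sqrt{s^2_{topk}+s^2_{rephrase}}$ is only a convenient surrogate motivated by variance addition (the variance of a logistic with scale $s$ is $s^2\pi^2/3$). Second, the Taylor expansion of $\sigma$ is only accurate when $u$ is small, i.e.\ when the predictive probability is not too far from $1/2$; for very confident predictions the exact relation $p_A(\bx) = \sigma\!\bigl(\sigma^{-1}(p(A\vert \bz_{mean}, f))/s\bigr)$ deviates from the linear tempering formula. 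I would remark that this still captures the qualitative effect the proposition is meant to illustrate, namely that the combined scale $s>1$ \emph{tempers} the categorical probability toward the uniform value $1/2$, consistent with the empirical observation in Figure~\ref{fig:topk}.
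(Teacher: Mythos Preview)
Your proposal is correct and follows essentially the same route as the paper: reduce to the binary hyperplane setup, express both $p(A\vert\bz_{mean},f)$ and $p_A(\bx)$ as sigmoids of $u=\bw^{\top}\bz_{mean}+b$ (the latter with argument $u/s$ via the logistic CDF assumption), and then linearize. The paper phrases the linearization slightly differently---it first writes the exact identity $p(A\vert\bz_{mean},f)=\sigma\bigl(F^{-1}(p_A(\bx))\bigr)$ and then Taylor-expands $\sigma$ around $0$ and $F^{-1}$ around $0.5$---but this is equivalent to your two separate expansions of $\sigma(u)$ and $\sigma(u/s)$, and your discussion of the two approximation sources (non-closure of logistics under addition, and the small-$u$ regime for the Taylor step) matches the paper's remarks.
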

\begin{proof}
We first claim that the sum of two logistic distributions $(\mu_1,s_1)$ and $(\mu_1,s_1)$ is approximately logistic with $(\mu_1+\mu_2,\sqrt{s^2_{1}+s^2_{2}})$ by claiming that logistic distributions are approximately Gaussian. Then considering that $p(A\vert \bz_{mean} , f) = \frac{1}{1+e^{F^{-1}(1-p_A(\bx))}}$ we can write
\begin{equation}
\begin{split}
p(A\vert \bz_{mean} , f) &= \frac{1}{1+e^{F^{-1}(1-p_A(\bx))}} = \frac{1}{1+e^{-F^{-1}(p_A(\bx))}} \\
&= 0.5 + \frac{1}{4}F^{-1}(p_A(\bx)) = 0.5 + \frac{1}{4}4\sqrt{s^2_{topk}+s^2_{rephrase}}(p_A(\bx)-0.5) \\
\end{split}
\end{equation}
In the first line we first considered that $F^{-1}$ for the logistic is symmetric thus $F^{-1}(1-p_A(\bx))=-F^{-1}(p_A(\bx))$. In the second line we first do a first order Taylor expansion around $0$ on $\frac{1}{1+e^{-x}}$ and then a first order Taylor expansion around $0.5$ on $F^{-1}$.
\end{proof}

\section{Additional comparisons with CoT} \label{cot_experiments}
We compare with Chain-of-Thought \cite{wei2022chain} for uncertainty estimation and plot the results in Table \ref{CoT_vs_rephrasing-main}. We find that we get competitive results with CoT. At the same time our method is significantly easier and more natural to implement for humans interacting via text with an LLM. In CoT one needs to first obtain a sequence of reasoning steps. These should then be used as additional context when asking an LLM to answer again the base question. By contrast we propose a simple one step process of rephrasing the base question.

\begin{table*}[t]
    \caption{Comparisons between our best rephrasing method and CoT. Our rephrasing method obtains comparable results to CoT in terms of Brier score and other calibration metrics.
    }
    \label{CoT_vs_rephrasing-main}
\centering
\small
\begin{tabular}{cccccccc}
\toprule
Dataset & Model & Method & Acc $\uparrow$ & ECE $\downarrow$ & TACE $\downarrow$ & Brier $\downarrow$  & AUROC $\uparrow$\\
\toprule
\multirow {6}{*}{ARC-C} & \multirow {2}{*}{Mistral-7B} & {CoT}      & 0.725 & 0.173 & 0.071 & 0.439 & 0.719 \\
& & {expansion}                                                        & 0.602 & 0.133 & 0.099 & 0.509 & 0.847 \\ 
\cdashline{2-8} \rule{0pt}{2.25ex}
& \multirow {2}{*}{Llama-2-7B} & {CoT}                              & 0.407 & 0.205 & 0.151 & 0.783 & 0.696 \\
& & {expansion}                                                        & 0.373 & 0.112 & 0.153 & 0.778 & 0.687 \\ 
\cdashline{2-8} \rule{0pt}{2.25ex}
& \multirow {2}{*}{Llama-2-13B} & {CoT}                             & 0.369 & 0.137 & 0.148 & 0.782 & 0.729 \\
& & {reword}                                                           & 0.445 & 0.084 & 0.119 & 0.714 & 0.721 \\ 
\midrule
\multirow {6}{*}{ARC-E} & \multirow {2}{*}{Mistral-7B} & {CoT}      & 0.857 & 0.07  & 0.037 & 0.211 & 0.829 \\
& & {reword}                                                           & 0.753 & 0.045 & 0.062 & 0.297 & 0.931 \\ 
\cdashline{2-8} \rule{0pt}{2.25ex}
& \multirow {2}{*}{Llama-2-7B} & {CoT}                              & 0.482 & 0.104 & 0.116 & 0.624 & 0.842 \\ 
& & {rephrase}                                                         & 0.535 & 0.131 & 0.117 & 0.603 & 0.830 \\ 
\cdashline{2-8} \rule{0pt}{2.25ex}
& \multirow {2}{*}{Llama-2-13B} & {CoT}                             & 0.463 & 0.097 & 0.124 & 0.61  & 0.884 \\
& & {expansion}                                                        & 0.524 & 0.078 & 0.12  & 0.552 & 0.893 \\ 
\midrule
\multirow {6}{*}{OBQA} & \multirow {2}{*}{Mistral-7B} & {CoT}       & 0.662 & 0.153 & 0.083 & 0.501 & 0.762 \\
& & {reword}                                                           & 0.552 & 0.105 & 0.102 & 0.592 & 0.796 \\
\cdashline{2-8} \rule{0pt}{2.25ex}
& \multirow {2}{*}{Llama-2-7B} & {CoT}                              & 0.39  & 0.185 & 0.145 & 0.805 & 0.713 \\
& & {expansion}                                                        & 0.362 & 0.083 & 0.138 & 0.775 & 0.678 \\ 
\cdashline{2-8} \rule{0pt}{2.25ex}
& \multirow {2}{*}{Llama-2-13B} & {CoT}                             & 0.37  & 0.166 & 0.153 & 0.801 & 0.683 \\
& & {rephrase}                                                         & 0.428 & 0.095 & 0.14  & 0.729 & 0.73  \\ 
\bottomrule
\end{tabular}
\end{table*}

\end{document}